\newcites{SI}{Supplemental References}
\title{Automatic Differentiation of Programs with\\ Discrete Randomness}
\author{Gaurav~Arya \\
  Massachusetts Institute of Technology, USA \\ 
  \texttt{aryag@mit.edu}\\
  \And
  Moritz~Schauer \\
  Chalmers University of Technology, Sweden \\
  University of Gothenburg, Sweden \\ 
  \texttt{smoritz@chalmers.se}\\
  \And
  Frank~Sch\"afer \\
  Massachusetts Institute of Technology, USA \\ 
  University of Basel, Switzerland\\
  \texttt{franksch@mit.edu}\\
  \And
  Chris~Rackauckas \\
  Massachusetts Institute of Technology, USA\\
  Julia Computing Inc., USA\\
  Pumas-AI Inc., USA\\
  \texttt{crackauc@mit.edu}\\
}
\begin{document}

\maketitle

\vspace{-15px}
\begin{abstract}
Automatic differentiation (AD), a technique for constructing new programs which compute the derivative of an original program, has become ubiquitous throughout scientific computing and deep learning due to the improved performance afforded by gradient-based optimization. However, AD systems have been restricted to the subset of programs that have a continuous dependence on parameters. Programs that have discrete stochastic behaviors governed by distribution parameters, such as flipping a coin with probability $p$ of being heads, pose a challenge to these systems because the connection between the result (heads vs tails) and the parameters ($p$) is fundamentally discrete. In this paper we develop a new reparameterization-based methodology that allows for generating programs whose expectation is the derivative of the expectation of the original program. We showcase how this method gives an unbiased and low-variance estimator which is as automated as traditional AD mechanisms. We demonstrate unbiased forward-mode AD of discrete-time Markov chains, agent-based models such as Conway's Game of Life, and unbiased reverse-mode AD of a particle filter. Our code package is available at \url{https://github.com/gaurav-arya/StochasticAD.jl}.
\end{abstract}

\section{Introduction}

Automatic differentiation (AD) is a technique for taking a mathematical program $X(p)$ and generating a new program $\tilde{X}(p) = \frac{\dd X}{\dd p}$ for computing the derivative \cite{griewank1989automatic,baydin2018automatic}. AD is widely used throughout machine learning and scientific computing due to the increased performance of gradient-based optimization techniques compared to derivative-free methods \cite{corliss2002automatic}. However, if $X(p)$ returns the flip of a coin with probability $p$ of receiving a $1$ and probability $1-p$ of receiving a $0$, it is clear that $\frac{\dd X}{\dd p}$ is not defined in the classical sense. But when attempting to calibrate the parameter $p$ to data, one may wish to fit the model using statistical quantities, e.g. find $p$ such that the average of $X(p)$ is close to the average sum of $N$ real-world coin flips. Given this use case, can one automatically construct a program $\tilde{X}(p)$ that computes the derivative of the statistical quantities, i.e. $\EE[\tilde{X}(p)] = \frac{\dd\EE[X(p)]}{\dd p}$?

\begin{figure}
\centering
\includegraphics[width=\textwidth]{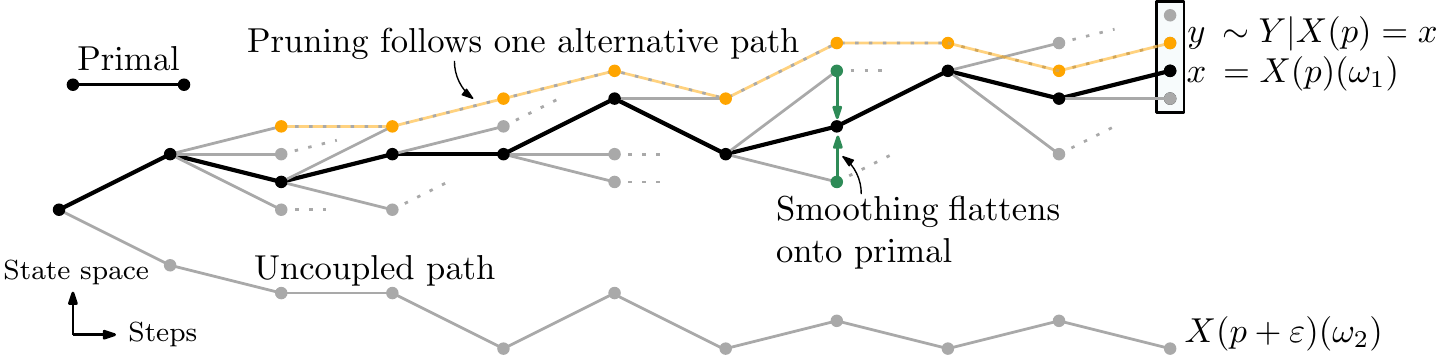}
\caption{Qualitative sketch of our method and comparison to finite differences.
The primal computation (solid black line) samples $X(p)$ with random number sequence $\omega_1$. Black-box finite differences samples the perturbed program $X(p+\epsilon)$ with an \emph{independent} random number sequence $\omega_2$ (bottom) for some \emph{finite} choice of $\epsilon$. In contrast, the component $Y$ of the stochastic derivative of the program (\cref{sec:motivation}) considers the effect of the minimal possible perturbations (gray lines diverging from primal path) to the original program which could stem from a truly \emph{infinitesimal} change in the input $p$.}
\label{fig:path}
\end{figure}

A na\"ive solution to this problem would be to use finite differences, i.e.:
\begin{equation}
    \frac{\dd\EE[X(p)]}{\dd p} \approx \frac{\EE[X(p + \epsilon)] - \EE[X(p)]}{\epsilon}.
\end{equation}
Finite differences' major issue in the context of stochastic programs is that this calculation does not correlate the calculation of $\EE[X(p + \epsilon)]$ with the calculation of $\EE[X(p)]$. This leads to a large variance in the finite-difference estimator~\cite{asmussen2007stochastic} that goes to $\infty$ as $\epsilon \rightarrow 0$. This issue of unbounded variance could be solved if one could run the perturbed program with the same set of random numbers and directly estimate $\EE[X(p + \epsilon) - X(p)]$ by a method that is well-posed in the limit of $\varepsilon \to 0$ (\cref{fig:path}).

We demonstrate how to automatically construct a new stochastic program $\tilde X(p)$ whose expectation satisfies $\EE[\tilde{X}(p)] = \frac{\mathrm{d}\EE[X(p)]}{\mathrm{d}p}$. We derive this through a technique which we term the \emph{stochastic derivative}, propagating the proportional probability of differing event outcomes due to infinitesimal changes in $p$. Our technique has the following design goals:

\begin{itemize}
    \item \textbf{Composability.} We differentiate stochastic programs $X(p)$ which are themselves composed of many ``elementary'' stochastic programs, including samples from discrete distributions such as the Bernoulli, Poisson, and Geometric, and samples from continuous distributions. These may be composed (chained, added, etc.) arbitrarily, with computational cost independent of how they are composed.\item \textbf{Unbiasedness.} The program's discrete structure is preserved, leading to a provably unbiased estimator that avoids continuous relaxations and tunable or learned internal parameters.
    \item \textbf{Low variance.} We consider correlated paths through the computation that are linked by the smallest possible perturbation (\cref{fig:path}), generalizing the widely-used pathwise gradient estimator for continuous randomness~\cite{mohamed2020monte} to the discrete case.
\end{itemize}

We show the utility of our technique by demonstrating forward-mode AD of stochastic simulations like inhomogeneous random walks and agent-based models such as the Game of Life. To achieve~$\mathcal{O}(1)$ computational overhead for these applications, we combine stochastic derivatives with an online strategy we call \emph{pruning}. We also demonstrate a straightforward way to perform reverse-mode AD from our approach via \emph{smoothing}, allowing us to derive from first principles the biased but empirically successful straight-through gradient estimator \cite{bengio} as a special case and to construct an unbiased end-to-end reverse-differentiable particle filter, recovering a technique discovered in \cite{scibior2021differentiable}.
\ We provide an open-source implementation of the method, \texttt{StochasticAD.jl}, for readers to explore the technique on their own applications.

\vspace{-5px}
\subsection{Related work}
\vspace{-5px}

Gradient estimators for stochastic functions can be divided into three different classes~\cite{mohamed2020monte}:  score-function~\cite{glynn1990likelihood,williams1992simple,kleijnen1996optimization}, measure-valued~\cite{pflug1989sampling,heidergott2008measure}, and pathwise~\cite{glasserman1991gradient,kingma2013auto} gradient estimators. 
When it comes to discrete randomness, the score-function method is a popular general-purpose choice because it is unbiased and can be composed through stochastic computation graphs~\cite{schulman2015gradient}. 
However, the score-function method does not search for correlated paths through the computation and thus suffers from high variance, making gradient computation with discrete variables challenging. A number of techniques (e.g. REBAR~\cite{tucker2017rebar}, RELAX~\cite{grathwohl2017backpropagation}) have been introduced that rely on control variates for variance reduction~\cite{glynn2002some}. 
Measure-valued derivatives also have no notion of intrinsic coupling, though coupling can be achieved using common random numbers for certain distributions~\cite{heidergott2008sensitivity}.
As an alternative direction, Gumbel-Softmax~\cite{jang2016categorical} considers a continuous relaxation of discrete programs so that a pathwise gradient estimator, based on the ``reparameterization trick''~\cite{kingma2013auto}, can be applied. However, such methods face a bias-variance tradeoff and are inapplicable to discrete programs that cannot be continuously relaxed. 
Our stochastic derivatives also extend the pathwise gradient estimator to discrete programs but do so unbiasedly.
 The conceptual starting point of our approach is finite differences with common random numbers~\cite{anderson2012efficient,thanh2018efficient,glasserman1992some}, whose ideas have also been extended by direct optimization~\cite{NIPS2010_ca8155f4,NEURIPS2020_d1e7b08b}, but
 crucially we show how to take the exact limit of step size $\varepsilon \to 0$ even in the discrete case.
\ The field of smoothed perturbation analysis~\cite{gong1987smoothed,glasserman1990smoothed} develops a mathematically equivalent object to our smoothed stochastic derivative based on conditional expectations, which is a special case of our formalism, and has also considered a randomized approach similar to our pruning technique in the context of generalized semi-Markov processes~\cite{Fu1997}.  
However, these ideas have not previously been applied to construct a general-purpose AD method via rigorous composition rules and the algorithms and data structures to realize the approach automatically. 
\looseness=-1

Although our method can be used to hand-derive a gradient estimator, the main feature is composability through user-written functions, enabling an automated mechanism. While mainstream AD frameworks do not support unbiased differentiation of discrete random programs, Storchastic~\cite{krieken2021storchastic} is a specialized framework for AD of stochastic computation graphs~\cite{schulman2015gradient} where the user can specify which estimator to use at each node, as well as any tunable hyperparameters. Storchastic implements an exhaustive set of prior gradient estimation
methods at each sampling step. However, the runtime of the derivative estimate is in general exponential in the length of the largest chain of stochastic nodes, an artifact of the way many prior gradient estimators compose~\cite{krieken2021storchastic,StorchasticReview}. In Section \ref{sec:demos} we demonstrate stochastic AD that matches the computational complexity of deterministic AD even when discrete random functions are chained together, alleviating these performance issues.

\vspace{-5px}
\section{Composable derivatives of stochastic programs}
\vspace{-5px}
\label{sec:motivation}

  In this section, we develop the notion of a \emph{stochastic derivative} for programs containing discrete randomness. We shall motivate this object as a natural generalization of the pathwise gradient estimator for continuous random programs, and present the key ideas underpinning the formalism.\ When describing infinitesimal asymptotics, we say a function $g(\varepsilon)$ is $\mathcal{O}(\varepsilon)$ if $|g(\varepsilon)| \leq C|\varepsilon|$ for some real $C$ and all sufficiently small $\varepsilon$.\ Colloquially, we describe quantities that are $\mathcal{O}(\varepsilon)$ as ``infinitesimal''.

\vspace{-4.5px}
\subsection{Infinitesimally perturbing a stochastic program}
\vspace{-4.5px}
\label{sec:infinitesimals}

We are interested in differentiating stochastic programs, formally defined below. 
A stochastic program can be thought of as a map from an input $p$ to a random variable $X(p)$.
Here, $X(p)$ can represent either an ``elementary'' program such as a draw from a Bernoulli distribution, or the full user-provided stochastic program represented using many elementary programs, e.g. a simulation of a random walk. 
\begin{definition}
\label{def:program}
A \emph{stochastic program} $X(p)$ is a stochastic process with values in a Euclidean space $E$, whose index set $I$ is either an open subset of a Euclidean space or a closed real interval.\end{definition}
Let $\Omega$ be the sample space, equipped with a probability distribution $\mathbb{P}$.
To sample $X(p)$ at input $p \in I$, which we call the \emph{primal evaluation}, one should imagine a sample $\omega$ being randomly chosen from $\Omega$ according to $\mathbb{P}$ to produce an output $X(p)(\omega) \in E$. Note that $\mathbb{P}$ is independent of $p$ and $X(p)$ is a map $\Omega \to E$; such a formulation has been called the ``reparameterization trick''~\cite{kingma2013auto}. For example, a Bernoulli distribution $\operatorname{Ber}(p)$ can be represented by choosing a uniform random $\omega \in [0,1]$ and defining $X(p)(\omega) = \mathbf{1}_{[1-p, 1]}(\omega)$, where $\mathbf{1}_{S}$ is the indicator function for a set $S$ (see \cref{fig:ber1}).

\begin{figure}
     \centering
     \begin{subfigure}[b]{0.32\textwidth}
         \centering
         \includegraphics[height=4.5cm]{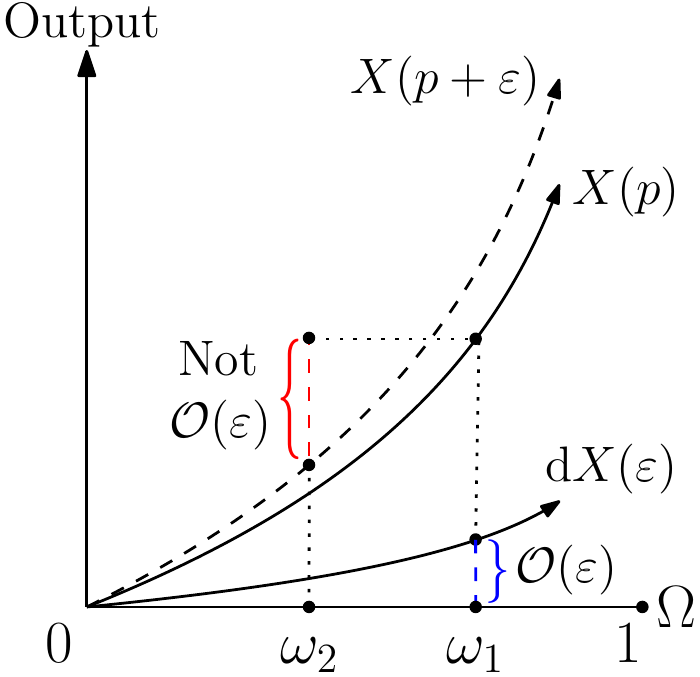}
         \caption{$X(p) \sim \operatorname{Exp}(p)$.}
         \label{fig:exp}
     \end{subfigure}
     \hfill
     \begin{subfigure}[b]{0.32\textwidth}
         \centering
         \includegraphics[height=4.5cm]{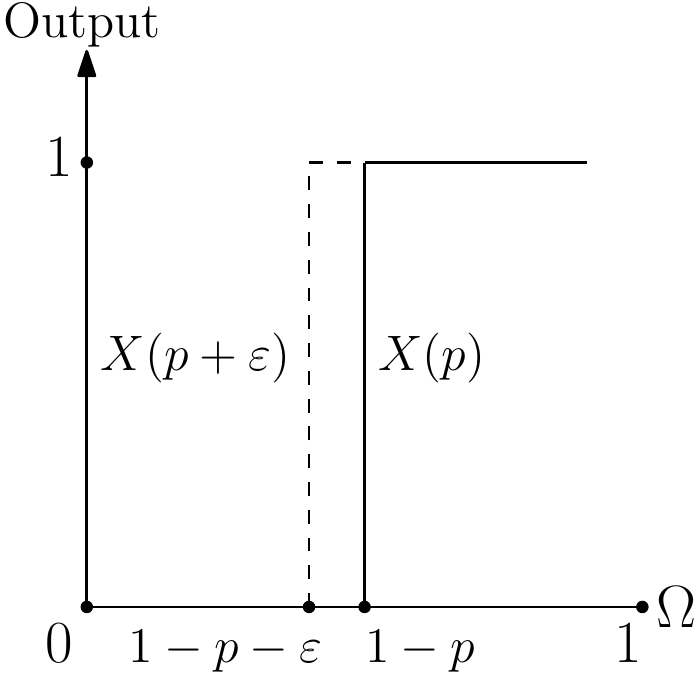}
         \caption{$X(p) \sim \operatorname{Ber}(p)$.}
         \label{fig:ber1}
     \end{subfigure}
     \begin{subfigure}[b]{0.32\textwidth}
         \centering
         \includegraphics[height=4.5cm]{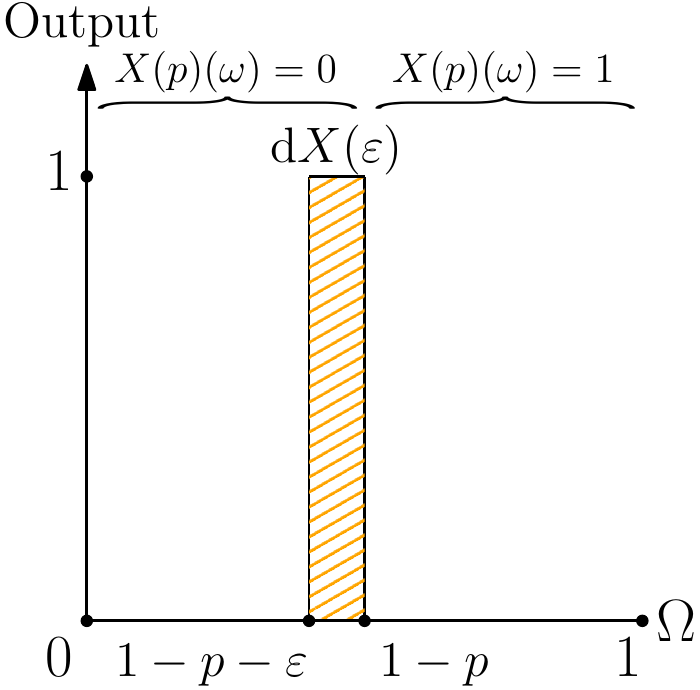}
         \caption{$X(p) \sim \operatorname{Ber}(p)$.}
         \label{fig:ber2}
     \end{subfigure}
     \hfill
        \caption{Illustrations of $\dd X(\varepsilon)$ for a small finite $\varepsilon$, assuming $\varepsilon > 0$ for simplicity. 
\textbf{(a)} The $\mathcal{O}(\varepsilon)$ range of $\dd X(\varepsilon)$ for continuous $X(p)$. The blue bracket indicates a sample from $\dd X(\varepsilon)$, while the red bracket indicates a sample obtained by black-box finite differences.
\textbf{(b)} The original program $X(p)$ and the perturbed program $X(p+\varepsilon)$ for discrete (Bernoulli) $X(p)$.
        \textbf{(c)} Only those samples $\omega$ below the hatched area cause a non-zero change in output. The hatched area equals $\EE[\dd X(\varepsilon)]$, the expected size of a change for randomly drawn $\omega$.
} 
        \label{fig:infinitesimal}
\end{figure}

At fixed $p \in I$ we define the differential $\dd X(\varepsilon)$, which is itself a stochastic program:
\begin{equation}
    \dd X(\varepsilon) = X(p+\varepsilon) - X(p). \label{def:infinitesimal}
\end{equation}
Let us now restrict our attention to the case where $I$ is a closed real interval, so that $p, \varepsilon \in \mathbb{R}$. The sensitivity of stochastic programs $Z$ with more general index sets can be understood at an input $\mathbf{u}$ by studying at $p = 0$ the directional perturbation $X(p) = Z(\mathbf{u} + p \mathbf{v})$ in a direction $\mathbf{v}$.

For the full user-provided stochastic program, we are interested in computing $\frac{\dd\EE[X(p)]}{\dd p}$, which is related to $\EE[\dd X(\varepsilon)]$ by
\begin{equation}
\frac{\dd \EE[X(p)]}{\dd p} = \lim_{\varepsilon \to 0} \frac{\EE[\dd X(\varepsilon)]}{\varepsilon}.
\label{eq:EdX}
\end{equation}
We thus expect $\dd X(\varepsilon)$ to have ``infinitesimal'' $\mathcal{O}(\varepsilon)$ expectation as $\varepsilon$ approaches 0. Let us consider the form of $\dd X(\varepsilon)$ for two elementary programs: a program returning a sample from the continuous Exponential distribution $\operatorname{Exp}(p)$ with scale $p$ and a program returning a sample from the discrete Bernoulli distribution $\operatorname{Ber}(p)$ with success probability $p$. In both cases, the program is parameterized via the inversion method~\cite{devroye2006nonuniform} over the sample space $\Omega = [0,1]$, which chooses $\mathbb{P}$ to be uniform over $[0,1]$ and $X(p)(\omega)$ to be non-decreasing in $\omega$. 
For $X(p) \sim \operatorname{Exp}(p)$, $X(p)(\omega) = -p\log(1 - \omega)$ is differentiable in $p$. Thus, the range of $\dd X(\varepsilon)$ is $\mathcal{O}(\varepsilon)$ for any fixed $\omega$, as illustrated in \cref{fig:exp}. 

In contrast, consider $X(p) \sim \operatorname{Ber}(p)$, with $X(p)(\omega) = \mathbf{1}_{[1-p, 1]}(\omega)$. As shown in \cref{fig:ber1}, as $\varepsilon \to 0$ the output of $X(p)$ at a random $\omega \in \Omega$ almost surely does not change in $X(p+\varepsilon)$. Specifically, $\dd X(\varepsilon)$ is 0 with probability $1 - \varepsilon \approx 1$, and assumes the value $1$ with probability $\varepsilon$ (\cref{fig:ber2}). This is the fundamental challenge presented by discrete randomness: a $\mathcal{O}(\varepsilon)$ change in the input turns into a \emph{finite} perturbation to the output, where ``finite'' means that it is non-vanishing as $\varepsilon \to 0$.\ The perturbation occurs with ``infinitesimal'' $\mathcal{O}(\varepsilon)$ probability, contributing to the $\mathcal{O}(\varepsilon)$ expectation of $\dd X(\varepsilon)$.
\looseness=-1

\vspace{-3px}
\subsection{Coupling to the primal}
\vspace{-3px}
\label{sec:coupling}

To produce low-variance estimates of the derivative of $X(p)$, it is important that the primal and derivative computations are coupled by a shared source of randomness. To see this, let us understand what happens when they are entirely uncoupled. A black-box finite difference approach with step size $\varepsilon$ would independently sample $\omega_1$ and $\omega_2$ from $\Omega$ (\cref{fig:exp}, red), computing the derivative estimate $\left(X(p+\varepsilon)(\omega_2) - X(p)(\omega_1)\right)/\varepsilon$. However, since the samples are independent, the variance of the estimator is of order $1/\varepsilon^2$, so we are forced to pick a finite $\varepsilon$ to balance a bias-variance tradeoff. 

This motivates using the same random sample for the primal and derivative computations (\cref{fig:exp}, blue). 
For continuous randomness, taking the limit of this approach as $\varepsilon \to 0$ leads to the widely-used pathwise gradient estimator $\delta$, given as the almost sure limit of $\dd X(\varepsilon) / \varepsilon$ as $\varepsilon \to 0$ (i.e.~the pointwise derivative of $X(p)$ at each fixed $\omega$), so that
\begin{equation}
   \frac{\dd \EE[X(p)]}{\dd p} = \lim_{\varepsilon \to 0} \frac{\EE[\dd X(\varepsilon)]}{\varepsilon} \stackrel{?}{=} \EE\left[ \lim_{\varepsilon \to 0} \frac{\dd X(\varepsilon)}{\varepsilon}\right] = \EE\left[\delta\right].
   \label{eq:exchange}
\end{equation}
But, considering a simple Bernoulli variable $X(p) \sim \operatorname{Ber}(p)$ as in \cref{fig:ber2}, we see how this approach is ill-suited for the discrete case! As $\varepsilon$ approaches 0, the differential $\dd X(\varepsilon)$ is non-zero with infinitesimal~$\mathcal{O}(\varepsilon)$ probability.\ This means that $\delta$ is almost surely 0, while the true derivative of $\EE[X(p)] = p$ is $1$. The finite perturbation is neglected. 

Thus, the pathwise gradient estimator needs to be modified to handle discrete randomness. The issue with the interchange of limit and expectation in (2.3) is that $\dd X(\varepsilon) / \varepsilon$ is unbounded in $\varepsilon$ in the presence of a finite perturbation. This motivates explicitly considering the event of a large jump in~$\dd X(\varepsilon)$, as characterized by the event $A_B(\varepsilon) = 
\{|\dd X(\varepsilon)| > B|\varepsilon|\}$ for a chosen random bound $B > |\delta|$. The event $A_B(\varepsilon)$ has $\mathcal{O}(\varepsilon)$ probability, which is vanishingly small, but its contribution to the derivative estimate cannot be neglected because it contains finite perturbations. This motivates sampling from this part of the probability space \emph{separately}. Formally, we introduce a random weight $w \in \mathbb{R}$ and alternate value $Y \in E$ that characterize the sensitivity of $X(p)$ when the probability space is \emph{restricted} to $A_B(\varepsilon)$ [as given by the r.h.s. of \eqref{eq:deriv} below], forming the ``stochastic derivative'':


\begin{definition}[Stochastic derivative]
\label{def:deriv}
Suppose $X(p) \in E$ is a stochastic program with index set $I$ a closed real interval. 
We say that the triple of random variables $(\delta, w, Y)$, with $w \in \mathbb{R}$ and $Y \in E$, is a right (left) \emph{stochastic derivative} of $X$ at the input $p \in I$ if $\dd X(\varepsilon) / \varepsilon \to \delta$ almost surely as $\varepsilon \to 0$, and there is an integrable (i.e.~of bounded expectation) random variable $B > |\delta|$ such that for all bounded functions $f\colon E \to \mathbb{R}$ with bounded derivative it holds almost surely that
\begin{equation}
\label{eq:deriv}
\EE\left[w \left(f(Y) - f(X(p))\right) \xmiddle| X(p) \right] = \lim_{\varepsilon \to 0^{+/-}} \EE\left[\frac{f(X(p + \varepsilon)) - f(X(p))}{\varepsilon} \ind_{A_B(\varepsilon)} \xmiddle| X(p)\right],
\end{equation}
with limit taken from above (below), where $\PP\left(A_B(\varepsilon) \xmiddle| X(p)\right)/\varepsilon$ is dominated by an integrable random variable for all $\varepsilon > 0$ ($\varepsilon < 0$). \end{definition}
A stochastic derivative may be collapsed into an unbiased estimator of the derivative of $\EE[X(p)]$.
\begin{proposition}[Unbiasedness]
\label{prop:unbiased}
If $(\delta, w, Y)$ is a stochastic derivative of $X(p)$ at $p$, it holds that
\begin{equation}
\label{eq:unbiased}
\frac{\dd\EE \left[X(p)\right]}{\dd p} = \EE[\delta + w\left(Y-X(p)\right)].
\end{equation}
\end{proposition}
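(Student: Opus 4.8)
The plan is to start from the characterization $\frac{\dd\EE[X(p)]}{\dd p} = \lim_{\varepsilon \to 0}\EE[\dd X(\varepsilon)]/\varepsilon$ supplied by \eqref{eq:EdX}, and to split the differential according to whether the large-jump event $A_B(\varepsilon)$ occurs:
\begin{equation*}
\frac{\EE[\dd X(\varepsilon)]}{\varepsilon} = \EE\left[\frac{\dd X(\varepsilon)}{\varepsilon}\ind_{A_B(\varepsilon)^{\mathrm c}}\right] + \EE\left[\frac{\dd X(\varepsilon)}{\varepsilon}\ind_{A_B(\varepsilon)}\right].
\end{equation*}
The goal is to show that the first (small-perturbation) term converges to $\EE[\delta]$ and the second (large-jump) term to $\EE[w(Y - X(p))]$; summing the two limits then yields \eqref{eq:unbiased}. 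Since $E$ is Euclidean, I would reduce to the scalar case by arguing componentwise, applying the test-function identity to coordinate projections throughout.

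For the small-perturbation term I would argue by dominated convergence. On $A_B(\varepsilon)^{\mathrm c} = \{|\dd X(\varepsilon)| \le B|\varepsilon|\}$ the integrand is bounded in absolute value by the integrable random variable $B$. Since $\dd X(\varepsilon)/\varepsilon \to \delta$ almost surely and $B > |\delta|$ strictly, for almost every $\omega$ the sample eventually lies in $A_B(\varepsilon)^{\mathrm c}$, so $\ind_{A_B(\varepsilon)^{\mathrm c}} \to 1$ and the integrand converges almost surely to $\delta$. Dominated convergence then delivers the limit $\EE[\delta]$.

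For the large-jump term the idea is to invoke the defining identity \eqref{eq:deriv}. First I would take the expectation over $X(p)$ on both sides using the tower property and push the limit through the outer expectation: for bounded $f$ the conditional expectation on the right is dominated by $2\|f\|_\infty\,\PP(A_B(\varepsilon)\mid X(p))/|\varepsilon|$, which the hypothesis assumes is dominated by an integrable random variable, so dominated convergence licenses the interchange and gives
\begin{equation*}
\EE\left[w(f(Y) - f(X(p)))\right] = \lim_{\varepsilon \to 0}\EE\left[\frac{f(X(p+\varepsilon)) - f(X(p))}{\varepsilon}\ind_{A_B(\varepsilon)}\right]
\end{equation*}
for every bounded $f$ with bounded derivative. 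It then remains to upgrade this from bounded test functions to the identity $f(x) = x$, which would turn the right-hand side into exactly the large-jump term and the left-hand side into $\EE[w(Y - X(p))]$.

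This last upgrade is the main obstacle, since the identity is unbounded whereas \eqref{eq:deriv} only controls bounded $f$. I would handle it by a truncation argument, choosing smooth clamps $f_M$ that agree with the identity on $[-M,M]$, have derivative bounded by $1$, and satisfy $|f_M(x)| \le |x|$. Sending $M \to \infty$ on the left is routine by dominated convergence, provided $w(Y - X(p))$ and the envelope $|w|(|Y| + |X(p)|)$ are integrable. The delicate point is exchanging the $M \to \infty$ and $\varepsilon \to 0$ limits on the right: one must show that the contribution to $\EE\left[\frac{\dd X(\varepsilon)}{\varepsilon}\ind_{A_B(\varepsilon)}\right]$ from samples where $X(p)$ or $X(p+\varepsilon)$ leaves $[-M,M]$ is uniformly small in $\varepsilon$. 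This calls for a uniform-integrability estimate on the jump magnitude over $A_B(\varepsilon)$, for which I would combine the integrability hypotheses with the $\mathcal{O}(\varepsilon)$ control of $\PP(A_B(\varepsilon)\mid X(p))$, and it is here that essentially all the technical care is concentrated.
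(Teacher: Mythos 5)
Your proposal follows essentially the same route as the paper's proof: split $\EE[\dd X(\varepsilon)/\varepsilon]$ on the event $A_B(\varepsilon)$ and its complement, use dominated convergence with the integrable bound $B$ on the complement to recover $\EE[\delta]$, and invoke the defining identity \eqref{eq:deriv} on $A_B(\varepsilon)$ to recover $\EE[w(Y-X(p))]$, finishing with the tower property. The one place you go beyond the paper is the truncation step for upgrading \eqref{eq:deriv} from bounded test functions to the identity: the paper simply applies \eqref{eq:deriv} with $f$ equal to the identity even though \cref{def:deriv} states the identity only for bounded $f$, so your extra care addresses a point the paper leaves implicit, at the cost of the uniform-integrability estimate you flag but do not fully carry out.
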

\begin{proof}[Proof sketch]
With $f$ as identity, by \cref{def:deriv} the sensitivity of $X(p)$ over $A_B(\varepsilon)$ is given by $w(Y-X(p))$. Given the complement event $A_B^c(\varepsilon)$, it holds that $|\dd X(\varepsilon) / \varepsilon| \leq B$: a dominated convergence argument shows that the sensitivity of $X(p)$ is then given by its almost-sure derivative~$\delta$.
\end{proof}
\vspace{-10px}
\cref{thm:main} shows the existence of the stochastic derivative for a stochastic program, subject to technical assumptions given in our formal treatment (\cref{sec:proofs}). \begin{theorem}[Existence, simplified]
\label{thm:main}
Given a sufficiently regular stochastic program $X(p)$ with index set $I$ a closed interval $[a,b] \subset \mathbb{R}$, there exists a right stochastic derivative $(\delta, w_R, Y_R)$ with $w_R \geq 0$ at any $p \in [a,b)$ and a left stochastic derivative $(\delta, w_L, Y_L)$ with $w_L \le 0$ at any $p \in (a,b]$.
\end{theorem}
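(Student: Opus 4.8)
The plan is to construct the triple $(\delta, w_R, Y_R)$ explicitly by separating the perturbation $\dd X(\varepsilon)$ into a ``diffuse'' part of order $\mathcal{O}(\varepsilon)$ and a ``jump'' part carrying finite but infinitesimally likely perturbations, and then to verify each clause of \cref{def:deriv}. Throughout I work at a fixed $p \in [a,b)$ with $\varepsilon > 0$; the left stochastic derivative at $p \in (a,b]$ is handled identically with $\varepsilon < 0$.

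First I would produce $\delta$. Under the regularity assumptions, for $\PP$-a.e.\ sample $\omega$ the map $\varepsilon \mapsto X(p+\varepsilon)(\omega)$ is right-differentiable at $\varepsilon = 0$: a jump in this map can occur only at an isolated set of parameter values, and since the elementary samplers are parameterized by the inversion method the set of $\omega$ at which a jump lands exactly at the chosen $p$ has measure zero. Hence $\dd X(\varepsilon)/\varepsilon$ converges almost surely to the pointwise right derivative $\delta$, which establishes the first clause of \cref{def:deriv} and supplies the continuous contribution.

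Next I would isolate the jumps. Choosing an integrable $B > |\delta|$ (possible since the a.s.\ derivative is controlled by the regularity hypotheses), the event $A_B(\varepsilon) = \{|\dd X(\varepsilon)| > B|\varepsilon|\}$ excludes the diffuse part for all sufficiently small $\varepsilon$, since there $|\dd X(\varepsilon)| \approx |\delta||\varepsilon| \le B|\varepsilon|$. On $A_B(\varepsilon)$ the output has jumped by a finite amount, and the key analytic step is to show that the rescaled conditional jump measure
\[
\mu_\varepsilon(\cdot) \;=\; \tfrac{1}{\varepsilon}\,\PP\bigl(X(p+\varepsilon) \in \cdot,\; A_B(\varepsilon) \,\big|\, X(p)\bigr)
\]
converges, as $\varepsilon \to 0^+$, to a finite conditional kernel $\mu_0$. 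I would obtain this by decomposing over the finitely many elementary samplers in the program: crossing one threshold of one sampler contributes a jump whose conditional probability is linear in $\varepsilon$ to leading order, with a well-defined limiting rate and a limiting post-jump value (the diffuse $\mathcal{O}(\varepsilon)$ perturbation of that value washing out in the limit), while the probability that two or more thresholds are crossed simultaneously is $\mathcal{O}(\varepsilon^2)$ and hence negligible at first order. Representing the limiting finite-mass kernel by its total mass $w_R \ge 0$ and its normalized law $Y_R$ then yields
\[
\EE\bigl[w_R(f(Y_R) - f(X(p))) \mid X(p)\bigr] \;=\; \int (f(y) - f(X(p)))\, \mu_0(\dd y)
\]
for every bounded $f$ with bounded derivative, which is exactly \eqref{eq:deriv}; nonnegativity of $w_R$ is immediate from $\varepsilon > 0$, and for $\varepsilon < 0$ the division by a negative quantity forces $w_L \le 0$. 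Boundedness of $f$ and $f'$ lets me replace $f(X(p+\varepsilon))$ by $f$ of the limiting jump target up to an $\mathcal{O}(\varepsilon)$ error, and the domination hypothesis on $\PP(A_B(\varepsilon)\mid X(p))/\varepsilon$ is precisely the uniform-integrability bound furnished by the bounded-jump-rate assumption, legitimizing the passage to the limit.

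The main obstacle I anticipate is the convergence of $\mu_\varepsilon$ together with the $\mathcal{O}(\varepsilon^2)$ control of coincident jumps in a program composing many elementary samplers, since the samplers are not independent once chained (earlier outputs feed later parameters) and the jump targets are themselves moving under the diffuse flow. Making ``at most one jump at first order'' precise --- and showing the resulting first-order kernel is insensitive to the $\mathcal{O}(\varepsilon)$ displacement of both the primal value and the alternate value --- is where the bulk of the technical work, and the real content of the ``sufficiently regular'' hypotheses, will lie.
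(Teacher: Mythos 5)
Your construction coincides with the paper's: $\delta$ is the almost-sure pointwise derivative, $w$ is the limit of $\PP\left(A_B(\varepsilon) \xmiddle| X(p)\right)/\varepsilon$, and $Y$ is the weak limit of the conditional law of $X(p+\varepsilon)$ given $X(p)$ and $A_B(\varepsilon)$, verified against bounded $f$ with bounded derivative essentially as you describe (the paper splits your unnormalized kernel into the normalized conditional law times $\ind_{A_B(\varepsilon)}/w(\varepsilon)$ plus an error term controlled by $|w(\varepsilon)/\varepsilon - w| \to 0$). The convergence of your rescaled jump kernel $\mu_\varepsilon$ --- which you rightly flag as the hard part --- is precisely what the paper packages into the ``sufficiently regular'' hypothesis (\cref{ass:limitingY}), deferring the one-jump-at-first-order analysis for composed programs to the separate chain rule (\cref{prop:composition}), so your remaining obstacle is not a gap in the proof of this statement but lies in verifying its hypotheses for concrete programs.
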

\begin{proof}[Proof sketch.]
The proof is by construction: at a high level, $w$ is the derivative of the probability of a large jump, while $Y$ follows the distribution of the possible jumps, conditional on a jump happening.  Specifically, $Y$ has distribution given as the limit as $\varepsilon \to 0$ of the conditional distribution of $X(p + \varepsilon) = X(p) + \dd X(\varepsilon)$ given the event $A_B(\varepsilon)$ and the outcome of~$X(p)$. The weight $w$ is given as the derivative w.r.t. $\varepsilon$ of the probability $\PP\left(A_B(\varepsilon) \xmiddle| X(p)\right)$ that $\dd X(\varepsilon)$ jumps by a non-infinitesimal amount, conditional on $X(p)$. Essentially, since $\PP\left(A_B(\varepsilon) \xmiddle| X(p)\right) \approx w\varepsilon$, multiplying by $w$ bridges the gap between \emph{conditioning} on $A_B(\varepsilon)$ [recall $Y$ is constructed conditional on $A_B(\varepsilon)$] and simply restricting the probability space to $A_B(\varepsilon)$ [i.e. multiplying by $\ind_{A_B(\varepsilon)}$ as in \eqref{eq:deriv}].
\end{proof}

\begin{example}[Right stochastic derivative of Bernoulli variable]
\label{ex:ber}
Suppose $X(p) \sim \operatorname{Ber}(p)$, parameterized via the inversion method, and take $\epsilon > 0$. As shown in \cref{fig:ber2}, $\dd X(\varepsilon)$ is parameterized as,
\begin{equation}
\dd X(\varepsilon)(\omega) = 
\begin{cases}
1 & \text{ if $1-p-\varepsilon \leq \omega < 1-p$,}\\
0 & \text{ otherwise.}
\end{cases}
\end{equation}
Given the event $X(p) = 1$, we have that $\omega \ge 1-p$ and $\dd X(\varepsilon)$ is deterministically 0. On the other hand, given $X(p) = 0$, we have $\omega < 1-p$, so with probability $\varepsilon/(1-p)$ the differential $\dd X(\varepsilon)$ assumes a value of 1, i.e. the Bernoulli variable flips from 0 to 1. Thus, we may construct a right stochastic derivative $(0, w_R, Y_R)$ of $X(p)$ by letting $w_R = 1/(1-p), Y_R = 1$ conditionally on $X(p) = 0$ and $w_R = Y_R = 0$ conditionally on $X(p) = 1$. For a concrete example, with $p = 0.6$, we have $w_R = 2.5, Y_R=1$ conditionally on $X(p) = 0$; \texttt{StochasticAD.jl} prints this as \texttt{"0 + (1 with probability 2.5ε)"} (as explained further in \cref{sec:toy}).

\end{example}

We give a number of examples of stochastic derivatives in \cref{sec:derived}. It is of crucial importance that $w$ and $Y$ depend conditionally on the output of $X(p)$. 
In particular, given a primal evaluation~${X(p) = x}$, the forms of $w$ and $Y$ depend only on the distribution of the perturbed program on the set $\{\omega: X(p)(\omega) = x\} \subset \Omega$, which elegantly generalizes the coupling achieved by the pathwise gradient estimator $\delta$ at each fixed $\omega \in \Omega$.
Intuitively, this coupling allows us to consider only the smallest possible perturbations to the program in the derivative computation (\cref{fig:path}), and thereby achieve variance reduction without resorting to continuous relaxations. For example, for a binomial variable $X(p) \sim \operatorname{Bin}(n,p)$ parameterized via the inversion method (a natural parameterization for maximizing coupling), it holds that $Y \in \{X(p)-1, X(p)+1\}$. We show in \cref{ex:int} that our gradient estimator for such a binomial variable has variance of order $n$, whereas the score function estimator has variance of order $n^3$, justifying this intuition.

\subsection{Composition of stochastic derivatives}
\label{sec:composition}

\cref{prop:unbiased} shows that the derivative estimate produced by stochastic derivatives is correct in expectation. But this is insufficient to ensure \emph{composition}, i.e. a stochastic derivative ``chain rule''. While \cref{def:deriv} requires composition through deterministic test functions $f$ to enforce a sufficiently strict definition, \cref{prop:composition} provides a general-purpose composition result through any program which has a stochastic derivative, as well as multidimensional programs with directional stochastic derivatives (i.e. stochastic derivatives of directional perturbations of the program).
\begin{theorem}[Chain rule, simplified]
\label{prop:composition}
Consider independent stochastic programs $X_1$ and $X_2$ and their composition $X_2 \circ X_1$. Suppose that $X_1$ has a right (left) stochastic derivative at $p \in \mathbb{R}$ given by $(\delta_1, w_1, Y_1)$, and $X_2$ has a right stochastic derivative $(\delta_2, w_2, Y_2)$ in the direction $\widehat{\delta}_1 = \delta_1 / |\delta_1|$ given conditionally on its input $X_1(p)$.
\ Then, under regularity and integrability assumptions, the stacked program $[X_1; X_2 \circ X_1]$ has a right (left) stochastic derivative $(\delta, w, Y)$  at $p$ where $\delta = \left[\delta_1 ; |\delta_1|\delta_2\right]$, \begin{equation} Y \quad = \quad \begin{cases}
    \label{eq:cases}
    [Y_1; X_2(Y_1)] & \text{ with probability }\quad\dfrac{w_1}{w_1 + |\delta_1| w_2}, \\
    [X_1(p); Y_2] & \text{ with probability }\quad \dfrac{|\delta_1| w_2}{w_1 + |\delta_1| w_2},
    \end{cases}
\end{equation}
and $w = w_1 + |\delta_1| w_2$. 
\end{theorem}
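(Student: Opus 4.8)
The plan is to verify \cref{def:deriv} directly for the stacked program $Z(p) = [X_1(p);\, X_2(X_1(p))]$, which lives on the product sample space $\Omega_1 \times \Omega_2$, by decomposing the perturbation $\dd Z(\varepsilon)$ according to whether the jump originates in $X_1$ or in $X_2$. I would focus on the right derivative ($\varepsilon \to 0^+$), the left case being symmetric once the direction/sign bookkeeping for $X_2$ is tracked. First I would establish the almost-sure part: for fixed $(\omega_1,\omega_2)$, as $\varepsilon \to 0$ neither component jumps (each jump event has $\mathcal{O}(\varepsilon)$ probability and shrinks to a pointwise-null set), so $\dd X_1(\varepsilon)/\varepsilon \to \delta_1$ and the input fed to $X_2$ moves smoothly by $\dd X_1(\varepsilon) = \delta_1\varepsilon + o(\varepsilon)$, i.e.\ along $\widehat{\delta}_1$ with magnitude $|\delta_1|\varepsilon$; applying the directional derivative $\delta_2$ of $X_2$ and the chain rule yields $\dd Z(\varepsilon)/\varepsilon \to [\delta_1;\, |\delta_1|\delta_2] = \delta$ as claimed.

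The core of the argument is the jump term. I would choose a bound $B > |\delta|$ so that the composite jump event $A_B(\varepsilon)$ asymptotically decomposes as a disjoint union $A^{(1)}(\varepsilon) \sqcup A^{(2)}(\varepsilon)$, where $A^{(1)}$ is the event that $X_1$ jumps and $A^{(2)}$ is the event that $X_1$ does not jump but $X_2$ does; the event that both jump has probability $\mathcal{O}(\varepsilon^2)$ and is discarded. On $A^{(1)}(\varepsilon)$, $X_1(p+\varepsilon) \to Y_1$ and $X_2$ is re-evaluated at this finitely-displaced input, so $Z(p+\varepsilon) \to [Y_1;\, X_2(Y_1)]$; applying the stochastic derivative of $X_1$ to the effective test function $y_1 \mapsto \EE[f([y_1;\, X_2(y_1)]) \mid X_1(p)]$ (integrating out $X_2$'s randomness, valid under the regularity assumptions) contributes $\EE[w_1(f([Y_1; X_2(Y_1)]) - f(Z(p))) \mid Z(p)]$. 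On $A^{(2)}(\varepsilon)$, the input of $X_2$ has moved by $\dd X_1(\varepsilon) \approx |\delta_1|\varepsilon\,\widehat{\delta}_1$, so treating $|\delta_1|\varepsilon$ as the effective step size in the directional stochastic derivative of $X_2$ produces the extra factor $|\delta_1|$ and contributes $\EE[|\delta_1| w_2 (f([X_1(p); Y_2]) - f(Z(p))) \mid Z(p)]$. Summing the two contributions and reading off the weight $w = w_1 + |\delta_1| w_2$ and the mixture law for $Y$ in \eqref{eq:cases} reproduces exactly the right-hand side of \eqref{eq:deriv}.

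The main obstacle I anticipate is the $A^{(2)}$ term: one must justify applying $X_2$'s directional stochastic derivative when the perturbation of its input is the genuine increment $\dd X_1(\varepsilon) = \delta_1\varepsilon + o(\varepsilon)$ rather than an exact straight-line displacement $|\delta_1|\varepsilon\,\widehat{\delta}_1$, and when the direction $\widehat{\delta}_1$ is itself random (depending on $\omega_1$). This requires showing that the $o(\varepsilon)$ correction does not alter the leading-order jump probability or jump law of $X_2$, which is precisely where the regularity assumptions enter, together with the domination hypotheses of \cref{def:deriv} to push the $\varepsilon \to 0$ limit through the conditional expectation by dominated convergence. A secondary piece of bookkeeping is the independence structure on $\Omega_1 \times \Omega_2$: since $X_2$'s stochastic derivative is specified conditionally on its input $X_1(p)$, I would condition first on $\omega_1$ (hence on $X_1(p)$ and on whether $X_1$ jumps), apply each component's stochastic derivative in the appropriate slice, and then take the outer expectation, factorizing via independence. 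Finally, I would verify the composite domination condition by bounding $\PP(A_B(\varepsilon) \mid Z(p))/\varepsilon$ above by $\PP(A^{(1)}(\varepsilon) \mid \cdot)/\varepsilon + \PP(A^{(2)}(\varepsilon) \mid \cdot)/\varepsilon$ and combining the integrable dominating variables inherited from $X_1$ and $X_2$.
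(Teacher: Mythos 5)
Your proposal is correct and follows essentially the same route as the paper's proof: the same decomposition of the jump event into ``$X_1$ jumps'' versus ``$X_2$ jumps'' with the double-jump discarded as $\mathcal{O}(\varepsilon^2)$, the same effective test function $y_1 \mapsto \EE[f([y_1; X_2(y_1)]) \mid X_1(p)]$ for the first case, and the same $|\delta_1|$-rescaled directional derivative for the second. The obstacle you flag---justifying the replacement of the genuine increment $\dd X_1(\varepsilon)$ by the straight-line displacement $\varepsilon\delta_1$---is precisely what the paper resolves by working with the uniformized jump event $\tilde{A}_2(\varepsilon) = \{|\dd X_2(\mathbf{v})| > B_2|\mathbf{v}| \text{ for some } |\mathbf{v}| \leq \varepsilon\}$ and its domination hypothesis.
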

\begin{proof}[Proof sketch]
Let $A_1(\varepsilon)$ be the event of a jump in $X_1$ and $A_2(\varepsilon)$ be the event of a jump in $X_2$ when its input is perturbed by $\varepsilon \delta_1$. Given $A_1^c(\varepsilon) \cap A_2^c(\varepsilon)$ (a jump in neither), a dominated convergence argument implies that $X_2 \circ X_1$ does not jump either. Given $A_1(\varepsilon)$, \cref{def:deriv} yields that the sensitivity of $X_2 \circ X_1$ is described by alternate values $Y_1$ with weight $w_1$, while given $A_2(\varepsilon)$ the sensitivity of $X_2 \circ X_1$ is described by alternate values $Y_2$ with weight $|\delta_1|w_2$. We may then form $w$ as the sum of these weights and $Y$ as a weighted distribution over the two cases. Crucially, we may neglect the sensitivity of $X_2 \circ X_1$ given $A_1(\varepsilon) \cap A_2(\varepsilon)$ (a jump in both), as this event has probability~$\mathcal{O}(\varepsilon^2)$. This prevents a combinatorial explosion in the complexity of~$Y$.
\end{proof}

\subsection{Smoothed stochastic derivatives}
\label{sec:smooth}

The reparameterization trick neglects finite perturbations, while stochastic derivatives precisely capture all possible finite perturbations. There exists a middle ground between these methods: one may take a conditional expectation on $X(p)$  
so that finite perturbations have been ``smoothed'' into infinitesimal ones. (E.g. \texttt{"0 + (1 with probability 2.5ε)"} becomes \texttt{"0 + 2.5ε"}.)
\begin{definition}[Smoothed stochastic derivative]
\label{def:smooth}
For a stochastic program $X(p)$ with a right (left) stochastic derivative $(\delta, w, Y)$ at input $p$, a right (left) smoothed stochastic derivative $\tilde{\delta}$ of $X$ at input $p$ is given as
\begin{equation}
\label{eq:defsmooth}
    \tilde \delta = \EE\left[\delta + w (Y- X(p)) \xmiddle| X(p)\right].\end{equation}
\end{definition}
Smoothed stochastic derivatives easily permit reverse-mode AD instead of forward-mode, as they have the same form as the usual derivative. However, they enjoy more limited composition properties: they propagate exactly through differentiable functions $f$ that are linear over the conditional distribution of $Y$ given~$X(p)$ via the standard chain rule, as we prove in \cref{app:smoothing}. Due to the coupling of $Y$ and $X(p)$, this is a much weaker requirement than global linearity, which can lead to low-bias estimates: for example, for the program $X(p) = \operatorname{Geo}(p)^3$ smoothed stochastic derivatives give a derivative estimate with $<0.5\%$ bias at $p = 0.01$, even though the cube function is highly non-linear on the inter-quartile range $[28, 137]$ of $\operatorname{Geo}(p)$. Smoothed stochastic derivatives recover the widely-used straight-through gradient estimator \cite{bengio} as a special case, as we work out in \cref{ex:st}.

\section{Automatic differentiation of stochastic programs}

\label{sec:demos}

We develop \texttt{StochasticAD.jl}, a prototype package for stochastic AD based on our theory of stochastic derivatives. As discussed in \cref{sec:smooth}, \emph{smoothed} stochastic derivatives obey the usual chain rule, and thus can be used with existing AD infrastructure by supplying custom rules for discrete random constructs, and we do so for a particle filter in \cref{sec:filter}. However, performing automatic differentiation with unsmoothed stochastic derivatives, which are unbiased in all cases, requires new innovation.
We develop a novel computational object called a \emph{stochastic triple}, introduced in \cref{sec:toy} and showcased in \cref{ex:walk} and \cref{sec:life}.

\vspace{-10px}
\subsection{Educational toy example of stochastic triples}
\vspace{-10px}
\label{sec:toy}
Forward-mode AD is often implemented with \emph{dual numbers}~\cite{baydin2018automatic}, which pair the primal evaluation of a deterministic function $f(p)$ with its derivative $\frac{\dd}{\dd p} f(p)$. Dual numbers can be propagated through a program using the chain rule. A useful alternative perspective of dual numbers is that they propagate an ``infinitesimal'' perturbation \texttt{ε} to the input through the program, where $\frac{\dd}{\dd p} f(p)$ is the coefficient of the ``dual'' element \texttt{ε}. 
Stochastic triples generalize dual numbers by including a third component $\Delta$s to describe finite perturbations with infinitesimal \emph{probability} (\cref{fig:toy}, left). 
\begin{figure} \centering
    \begin{tabular}{cc}
    \begin{minipage}[T]{0.5\linewidth}
        \begin{minted}[fontsize=\small, linenos, numbersep=4pt]{julia}
struct StochasticTriple
    value # primal evaluation
    δ # "infinitesimal" component
    Δs # component of discrete change
       # with "infinitesimal" 
       # probability
end
    \end{minted}
    \end{minipage} \hspace{-2em} \vline width .8pt & \hspace{0em}
    \begin{minipage}[T]{0.5\linewidth}
\begin{minted}[escapeinside=||, fontsize=\small, linenos, numbersep=4pt]{julia}
using Distributions
function X(p) |\colorbox{Cornsilk2}{p = 0.6 + ε}|
    a = p^2 |\colorbox{Cornsilk2}{0.36 + 1.2ε}|
    b = rand(Binomial(10, p)) 
    |\colorbox{Cornsilk2}{6 + (1 with probability 10.0ε)}|
    c = 2 * b + 3 * rand(Bernoulli(p))
    |\colorbox{Cornsilk2}{12 + (3 with probability 12.5ε)}| 
    return a * c * rand(Normal(b, a)) 
end
\end{minted}
    \end{minipage}
    \\
    \end{tabular}
    \begin{minted}[fontsize=\small,frame=lines, linenos, numbersep=2pt, escapeinside=||, ]{jlcon}
julia> using StochasticAD
julia> st = stochastic_triple(X, 0.6) # sample a single stochastic triple at p = 0.6
|\colorbox{Cornsilk2}{27.11 + 94.32ε + (6.78 with probability 12.5ε)}|
julia> derivative_contribution(st) # which produces a single derivative estimate...
|\colorbox{Cornsilk2}{179.04}|
julia> samples = [derivative_estimate(X, 0.6) for i in 1:1000] # take many estimates!
julia> println("d/dp of E[X(p)]: $(mean(samples)) ± $(std(samples) / sqrt(1000))")
|\colorbox{Gold2}{d/dp of E[X(p)]: 204.63 ± 1.25}|
    \end{minted}
    \caption{\textbf{Left:} Stochastic triple structure (simplified). \textbf{Right:} A toy program $X(p)$, using discrete distributions $\operatorname{Bin}(n,p)$, $\operatorname{Ber}(p)$, and the continuous normal distribution $\mathcal{N}(\mu, \sigma)$; the used stochastic derivatives are given in \cref{sec:derived}. Highlights show intermediate values during a single derivative estimate. 
    \textbf{Bottom:} Differentiating $\EE[X(p)]$; printout float precision reduced for clarity.}
    \label{fig:toy}
\end{figure}
 
In \cref{fig:toy}, we consider a toy program $X(p)$ including discrete randomness. We are interested in the derivative of $\EE[X(p)]$ at $p = 0.6$, and hence we provide the stochastic triple printed as \texttt{"0.6 + ε"} as input. First, the triple is squared and becomes \texttt{"0.36 + 1.2ε"}: this is the familiar way that dual numbers propagate, via the chain rule. But what happens when the triple is propagated through the discrete and random Binomial variable? The resultant stochastic triple \texttt{"6 + (1 with probability 10.0ε)"} is an integer with a component of \emph{discrete} change, reflecting an infinitesimal probability of one more success in the Binomial $\operatorname{Bin}(10, 0.6)$.
We can in fact understand why the probability is \texttt{"10ε"} by representing the Binomial as the sum of 10 Bernoulli variables, each with probability $0.6$. Since 4 of the Bernoulli's have an output of 0, they each have a probability \texttt{"2.5ε"} of switching to 1 (recall \cref{ex:ber}), and thus there is in total a \texttt{"10ε"} probability that the output of the Binomial increases  by 1 (rigorously, we have applied \cref{prop:composition}).
Formally, we can interpret a printout \texttt{"x + δε + (Δ with probability wε)"} as follows: \texttt{x} is a sample of the random variable $X(p)$ describing the primal evaluation, and \texttt{δ}, \texttt{w}, and \texttt{x+Δ} are samples of the components $\delta$, $w$, and $Y$, respectively, of the stochastic derivative of $X$ at $p$. 

Using our chain rule for stochastic derivatives (\cref{prop:composition}), we write rules for propagating stochastic triples through functions via operator overloading~\cite{RevelsLubinPapamarkou2016}, exploiting Julia's multiple dispatch feature~\cite{Julia-2017}. When multiple discrete changes are possible, we pick one probabilistically: we call this strategy \emph{pruning} (recall \cref{fig:path}) and show its unbiasedness in \cref{sec:pruning_proof}. For example, in line 6 of \cref{fig:toy}, right, we probabilistically choose between the perturbation to the Binomial and the perturbation to the Bernoulli, in this case picking the latter. To handle causal relationships between perturbations as in the first case of \cref{eq:cases}, we associate each perturbation with a tag to avoid erroneously pruning between two perturbations that occur simultaneously. Thus, stochastic triples can efficiently propagate through the full toy function written in \cref{fig:toy}. The function \texttt{derivative\_estimate} creates a stochastic triple, propagates it, and collapses it into the derivative estimate \texttt{δ + wΔ}, forming an unbiased estimate of the derivative via \cref{prop:unbiased} (\cref{fig:toy}, bottom).

\subsection{Inhomogeneous random walk}
\label{ex:walk}
\begin{figure}
\begin{subfigure}{1\textwidth}
    \refstepcounter{subfigure}\label{fig:walkcode}
    \refstepcounter{subfigure}\label{fig:walkgraph}
    \refstepcounter{subfigure}\label{fig:golboard}
    \end{subfigure}
    \includegraphics[width=\textwidth]{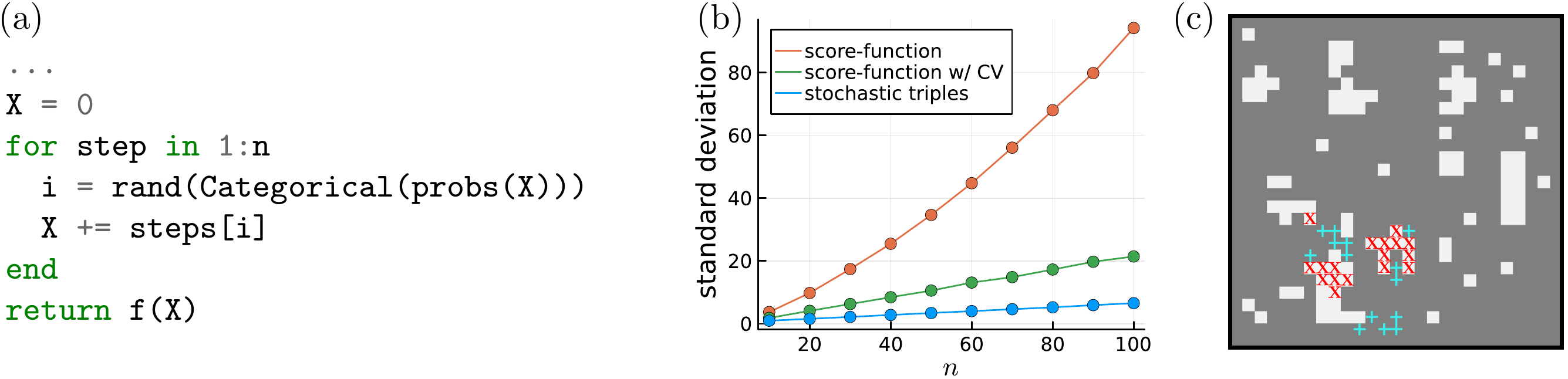}
    \caption{Automatic differentiation of discrete-time Markov processes. \textbf{(a)} Code snipped for a 1D random walk, which can be automatically differentiated by \texttt{StochasticAD.jl}; \texttt{probs(X)} gives the transition probabilities at value \texttt{X} and \texttt{steps[i]} gives the step size for the \texttt{i}th transition. \textbf{(b)} The variance of unbiased gradient estimates of the random walk program using stochastic triples and the score function, which is applied both without a control variate (CV) and with a pre-computed batch-average CV.  \textbf{(c)} The final board for one run of the stochastic Game of Life with $N = 25$ and $T = 10$, where the ``+'' signs represent additional living cells (white) in the stochastic alternative path, and the ``X'' signs represent additional dead cells (grey).}
    \label{fig:score}
\end{figure}

We consider a Markovian random walk $x_0, \dots, x_n$ on $\mathbb{Z}_{\geq 0}$, with transition behavior dependent on a parameter $p$ as follows,
 \begin{equation}
    x_n =\begin{cases} x_{n-1} + 1\textrm{ with probability } \exp\left(-\frac{x_{n-1}}{p}\right) \\
     x_{n-1} - 1\textrm{ with probability } 1 - \exp\left(-\frac{x_{n-1}}{p}\right)    \end{cases}, \quad x_0 = 0.
\end{equation}
We consider a program that stochastically simulates this walk and applies an arbitrary non-linear function $f$ to the output $x_n$. In practice, $f$ may represent a loss or a likelihood estimate; in this toy setting, we take $f(x) = x^2$. We are interested in studying the asymptotic behavior of the variance of our automatically derived gradient estimator, and so set $p = n$ so that the transition function varies appreciably over the range of the walk for all $n$. We find that the stochastic triple estimator has asymptotically lower variance than the score function estimator (\cref{fig:walkgraph}). 
Crucially, stochastic triples achieve this variance reduction while remaining entirely in discrete space, automatically producing a gradient estimate that is provably unbiased.

\subsection{Stochastic Game of Life}
\label{sec:life}

For our next example, we differentiate a stochastic version of John Conway's Game of Life, played on a two-dimensional board. In the traditional Game of Life, a dead cell becomes alive when 3 of its neighbors are alive, while a living cell survives when 2 or 3 of its neighbors are alive. In our stochastic version, each of these events instead has probability $95\%$, while their complementary events have probability $5\%$. Such types of discrete stochastic programs arise in many applications. For example, mixing machine learning with agent-based models found in epidemiology and sociological contexts~\cite{silverman2021situating,chopra2022differentiable} or rule-based models and discrete stochastic (Gillespie) simulations in systems biology~\cite{chylek2014rule,faeder2009rule} requires similar program constructs. 

Consider a program that populates each cell of an $N \times N$ board with probability $p$, runs the stochastic Game of Life for $T$ time steps, and counts the number of living cells $n_{\text{living}}$. We perform a sensitivity analysis of the final living population with respect to the initial living population, i.e. differentiate the expectation of $n_{\text{living}}$ with respect to $p$.
Stochastic triples propagate fully through the program, leading to an unbiased estimate of the derivative, as we verify with black-box finite differences. An example final board is depicted in \cref{fig:golboard}, along with the difference to the alternative final board chosen by pruning. This is a high-dimensional example (the state space has dimension $N^2 = 625$) with fundamentally discrete structure, providing support for the algorithmic correctness and generality of stochastic triples. In particular, the program cannot directly be continuously relaxed since it includes array indexing; in the approach of stochastic triples, integer-valued quantities stay integers.

\subsection{Particle filter}
\label{sec:filter}

\begin{figure}
\centering
\begin{subfigure}{1\textwidth}
    \refstepcounter{subfigure}\label{fig:pfviz}
    \refstepcounter{subfigure}\label{fig:pftail}
    \refstepcounter{subfigure}\label{fig:pfscaling}
    \end{subfigure}
\includegraphics[width=1\textwidth]{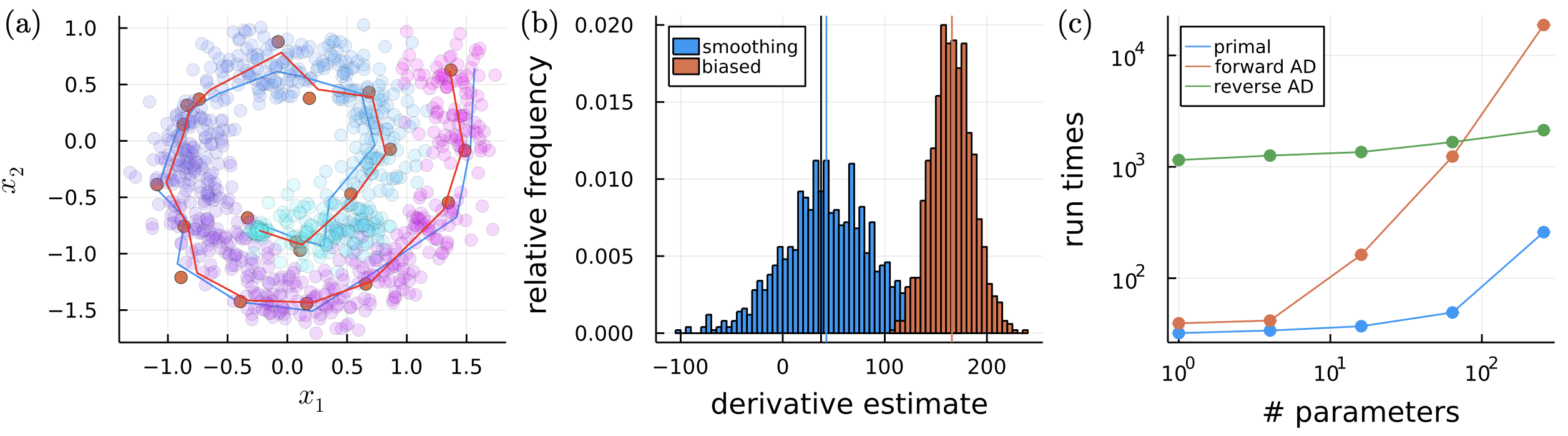}
\caption{
Unbiased differentiable particle sampler using smoothed stochastic derivatives. 
\textbf{(a)} Latent path (blue) and observations (orange) for a particular realization of the process given by the hidden Markov model (see \cref{sec:SI_filter} for details) in $d=2$ dimensions, including the recursively applied Kalman filter estimator (red) and the particles of the particle filter (going from blue at $n=1$ to pink at $n=20$).   
\textbf{(b)} The value of the derivative of $\log \mathcal{L}$ with respect to the first parameter for $d=2$ calculated by differentiating the Kalman filter is marked by a black line. The blue distribution corresponds to the derivative computed with smoothed stochastic derivatives. The orange distribution represents the biased derivative approach, where the resampling step is not differentiated. The means of the two distributions are highlighted by a line in the respective color. In both cases, we use 1000 samples. 
\textbf{(c)} Averaged run times in $\mathrm{ms}$ comparing forward- and reverse-mode AD for increasing numbers of parameters. 
}
\label{fig:ps}
\end{figure}

As a final example, we consider a hidden Markov model with random latent states $X_1, \dots, X_n$, observations $y_1 \sim Y_1 , \dots, y_n \sim Y_n$, and parameters $\theta$.  
The likelihood $\mathcal{L}= p(y_1, \dots, y_n \mid \theta)$ is in general not tractable, but a particle filter can be used to compute an estimate of $\mathcal{L}$.
Here, we assume familiarity with particle filters \cite{chopin2020introduction, doucet2009tutorial} but provide a short exposition with focus on the resampling step and experimental details in \cref{sec:SI_filter}. We assume the latent states to be continuous random, but discrete randomness enters through the resampling step.

To differentiate the particle filter resampling step, we provide a stochastic derivative formulation of the returned particles and importance weights (\cref{sec:SI_filter}).
Importantly, $\mathcal{L}$ can be expressed by the sum of the weights at the last step, and the weight assigned to each particle in the resampling steps is used in a \emph{purely linear} way. 
Smoothed stochastic derivatives permit \emph{unbiased} reverse-mode AD in this case. Our formulated approach, though derived very differently, is equivalent to the particle filter AD scheme developed in \cite{scibior2021differentiable} implementing the first estimator derived in \cite{poyiadjis2011particle}, as we show in \cref{sec:SI_filter}. Our particular choice of system allows the calculation of a ground-truth gradient of $\mathcal{L}$ by differentiating the Kalman filter algorithm~\cite{kalman1960new}.
\cref{fig:pfviz} visualizes the latent process and observations, and the Kalman and particle filter trajectories. 
Our estimator agrees with the Kalman filter derivative, unlike biased estimators~\cite{jonschkowski2018differentiable,corenflos2021differentiable,rosato2021efficient} that neglect the contribution of the resampling step or perform it with entropy-regularized optimal transport~(\cref{fig:pftail}). For our program, we observe reverse-mode AD to perform better for more than $\approx 100$ parameters (\cref{fig:pfscaling}). However, we find that the variance increases more rapidly with the number of steps and dimension as compared to biased estimators, suggesting that there is room for improvement in the coupling approach~\cite{scibior2021differentiable, poyiadjis2011particle}. 

\vspace{-4px}
\section{Limitations and outlook}
\vspace{-4px}

\label{sec:outlook}

We have presented a method for unbiased AD of programs with discrete randomness, which, we have argued, is a natural generalization of pathwise gradient estimators based on the reparameterization trick to the discrete case. 
However, more work will need to be done to turn our software demonstration \texttt{StochasticAD.jl} into an AD system capable of handling the full complexity of applications such as machine learning. 
A useful improvement would be better support for discrete constructs such as \texttt{if} statements with discrete random input; currently, such branches need to be rewritten using array indexing, which is supported. Further, an interesting direction for future work is to automatically handle functions which deterministically turn a continuous random quantity into a discrete one, such as an inequality comparison \texttt{X(p) > 0} or a Bernoulli variable implemented implicitly as \texttt{rand() < p} and also to handle constructs such as while loops based on such functions. It would also be interesting to explore synergies of our approach with ADEV~\cite{lew2022adev}, a Haskell-based framework for provably correct stochastic AD developed concurrently with our work, which formulates gradient estimation strategies using Haskell's continuation passing style.
\looseness-1


We also expect future work to focus on further variance reduction. Our method's variance depends on the degree to which the primal and derivative computations can be \emph{coupled}: while we present a natural method of coupling for a number of elementary stochastic programs and their compositions, the design space is rich when it comes to challenging examples such as the Game of Life or the resampling step of a particle filter.\ (This design space is reflected in the fact that the form of the stochastic derivative depends not only on a program's probability distribution, but also on the way it is parameterized.) Furthermore, the pruning operation can introduce additional variance: for example, the derivative estimate automatically produced for the program $B_1 + 2B_2$ with i.i.d. $B_i \sim \operatorname{Bin}(n, p)$ has variance $\mathcal{O}(n^2)$ due to the pruning between the \texttt{+1} and \texttt{+2} perturbations, even though stochastic derivatives give gradient estimators with variance $\mathcal{O}(n)$ for each of $B_1$ and $B_2$. Smoothing does not face this issue but accrues a bias through non-linear functions. Pruning and smoothing may be thought of as the simplest ways to construct an AD algorithm from stochastic derivatives, lying on opposite ends of the design space: we anticipate future work to address their suboptimalities and ideally form unbiased estimators for discrete random programs that fully close the variance gap to their continuous counterparts~\cite{kingma2013auto,maddison2016concrete}. Finally, going beyond smoothing for reverse-mode AD, and ideally achieving unbiasedness while remaining coupled to the primal, is an important open problem for large-scale applications.
\looseness=-1

\vspace{-4px}
\begin{ack}
\vspace{-4px}
We thank Alan Edelman, Guillaume Dalle, and the anonymous reviewers for their feedback, Emile van Krieken for helpful discussions regarding composing gradient estimators, and Simeon Schaub for help with the package. We acknowledge the MIT SuperCloud and Lincoln Laboratory Supercomputing Center for providing HPC resources that have contributed to the research results reported within this paper.
This material is based upon work supported by the National Science Foundation OAC-1835443, SII-2029670, ECCS-2029670, OAC-2103804, and PHY-2021825; the Advanced Research Projects Agency-Energy DE-AR0001211 and DE-AR0001222; the Defense Advanced Research Projects Agency (DARPA) HR00112290091; the United States Artificial Intelligence Accelerator FA8750-19-2-1000; the Chalmers AI Research Centre; and the Swiss National Science Foundation 51NF40-185902. The views and opinions of authors expressed herein do not necessarily state or reflect those of the United States Government or any agency thereof.
\end{ack}

\bibliographystyle{unsrt}
\bibliography{refs1}

\section*{Checklist}


\begin{enumerate}

\item For all authors...
\begin{enumerate}
  \item Do the main claims made in the abstract and introduction accurately reflect the paper's contributions and scope?
    \answerYes{}
  \item Did you describe the limitations of your work?
    \answerYes{} See \cref{sec:outlook}.
  \item Did you discuss any potential negative societal impacts of your work?
    \answerNo{}
  \item Have you read the ethics review guidelines and ensured that your paper conforms to them?
    \answerYes{}
\end{enumerate}

\item If you are including theoretical results...
\begin{enumerate}
  \item Did you state the full set of assumptions of all theoretical results?
    \answerYes{} See Appendix.
        \item Did you include complete proofs of all theoretical results?
    \answerYes{} See Appendix.
\end{enumerate}

\item If you ran experiments...
\begin{enumerate}
  \item Did you include the code, data, and instructions needed to reproduce the main experimental results (either in the supplemental material or as a URL)?
    \answerYes{} See \cref{sec:implementation} and the provided code. 
  \item Did you specify all the training details (e.g., data splits, hyperparameters, how they were chosen)?
     \answerYes{} 
    \item Did you report error bars (e.g., with respect to the random seed after running experiments multiple times)?
    \answerNo{} Our examples were small-scale, so we could take enough samples to achieve minimal error. 
     \item Did you include the total amount of compute and the type of resources used (e.g., type of GPUs, internal cluster, or cloud provider)?
    \answerYes{} See \cref{sec:implementation}.
\end{enumerate}

\item If you are using existing assets (e.g., code, data, models) or curating/releasing new assets...
\begin{enumerate}
  \item If your work uses existing assets, did you cite the creators?
    \answerYes{} We cite the Julia packages on which this work is based. 
  \item Did you mention the license of the assets?
    \answerNo{} We only use open-source software packages.
  \item Did you include any new assets either in the supplemental material or as a URL?
    \answerYes{} We have included our package in the supplemental material.
  \item Did you discuss whether and how consent was obtained from people whose data you're using/curating?
    \answerNA{} We do not use any previously collected data.
  \item Did you discuss whether the data you are using/curating contains personally identifiable information or offensive content?
    \answerNA{}
\end{enumerate}

\item If you used crowdsourcing or conducted research with human subjects...
\begin{enumerate}
  \item Did you include the full text of instructions given to participants and screenshots, if applicable?
    \answerNA{}
  \item Did you describe any potential participant risks, with links to Institutional Review Board (IRB) approvals, if applicable?
    \answerNA{}
  \item Did you include the estimated hourly wage paid to participants and the total amount spent on participant compensation?
    \answerNA{}
\end{enumerate}

\end{enumerate}

\newpage

\appendix

In the supplement, we:
\begin{itemize}
\item provide a number of examples of stochastic derivatives and present hand-worked examples of how they compose (\cref{sec:derived}),
\item prove our formal statements regarding stochastic derivatives (\cref{sec:proofs}), 
\item provide an introduction to the particle filter methodology, and show how smoothed stochastic derivatives leads to unbiased differentiation of the resampling step (\cref{sec:SI_filter}),
\item provide details about the hardware used for the experiments of the main text and the software dependencies of \texttt{StochasticAD.jl} (\cref{sec:implementation}).
\end{itemize}

\section{Examples of stochastic derivatives}
\label{sec:derived}

In this section, we present examples of stochastic derivatives for a number of stochastic programs. 

\begin{example}[Integer-valued stochastic programs]
\label{ex:int}

\begin{table}
  \caption{Probability mass functions and cumulative distributions of integer-valued stochastic programs, where $x$ is assumed in the domain of $X(p)$.}
  \label{mass-table}
  \centering
  \begin{tabular}{lllll}
    \toprule
    $X(p)$ & Parameter & $\PP(X(p) = x)$ & $\PP(X(p) \leq x)$ & $\partial_p \PP(X(p) \leq x)$ \\
    \midrule
    $\operatorname{Ber}(p)$ & Probability & $px + (1-p)(1-x)$ & $(1-p) + px$ & $x-1$  \\
    $\operatorname{Geo}(p)$ & Probability & $p(1-p)^x$ & $1-(1-p)^{x}$ & $x(1-p)^{x-1}$ \\
    $\operatorname{Pois}(p)$ & Rate & $p^x/(e^p x!)$ & $\sum_{k=0}^{x} e^{-p} p^k/k!$ & $-p^{x+1}/(e^p (x+1)!)$\\
    \bottomrule
  \end{tabular}
\end{table}
The Bernoulli, Binomial, Geometric, and Poisson distributions are all discrete, and in fact (nonnegative) integer-valued. Suppose $X(p)$ follows one of these distributions, parameterized using the inversion method~\cite{devroye2006nonuniform} over the interval $[0,1]$. We present a worked derivation of the stochastic derivative of $X(p)$, following the construction used in \cref{thm:main}.

We construct a stochastic derivative $(\delta, w, Y)$ at input $p$ satisfying \cref{def:deriv}. Since $X(p)$ is parameterized via the inversion method, $\dd X(\varepsilon) \in \{-1, 0, 1\}$ for small enough $\varepsilon$. Thus, for small enough $\varepsilon$, the conditional distribution of $\dd X (\varepsilon)$ given $X(p) = x$ is:
\begin{align}
\PP\left(\dd X(\varepsilon) = 1 \xmiddle| X(p) = x \right) &= \frac{\PP(X(p+\varepsilon) = x+1\text{ and }X(p) = x)}{\PP(X(p) = x)}, \\ 
\PP\left(\dd X(\varepsilon) = -1 \xmiddle| X(p) = x \right) &= \frac{\PP(X(p+\varepsilon) = x-1\text{ and }X(p) = x)}{\PP(X(p) = x)},
\end{align}
and $\dd X(\varepsilon) = 0$ otherwise. According to the inversion method, $X(p)(\omega) = x$ for $\omega$ between $\PP(X(p) \leq x - 1)$ and $\PP(X(p) \leq x)$. Taking the derivative of the above quantities as $\varepsilon \to 0$ therefore yields:
\begin{align}
w_+ &:= \lim_{\varepsilon \to 0} \frac{\PP\left(\dd X(\varepsilon) = 1 \xmiddle| X(p) = x \right)}{\varepsilon}, \\
&= \frac{\frac{\dd}{\dd p} \PP(X(p) \leq x)}{\PP(X(p) = x)} \ind\{\varepsilon \cdot \partial_p \PP(X(p) \leq x) < 0\},
\end{align}
and
\begin{align}
w_- &:= \lim_{\varepsilon \to 0} \frac{\PP\left(\dd X(\varepsilon) = -1 \xmiddle| X(p) = x \right)}{\varepsilon}, \\
&= \frac{\frac{\dd}{\dd p} \PP(X(p) \leq x - 1)}{\PP(X(p) = x)} \ind\{\varepsilon \cdot \partial_p \PP(X(p) \leq x -1) > 0\},
\end{align}
where the $\varepsilon$ included on the right hand sides is an abuse of notation to express their dependence on the direction of the limit, and $\ind\{C\} = \ind_{\{C\}}$ for a condition $C$. By the differentiability of $\PP(\dd X(\varepsilon) \neq 0)$, $\dd X(\varepsilon)/\varepsilon$ almost surely approaches $0$, so $\delta = 0$. A simple choice for the bound $B$ is~$1$: then $|\dd X(\varepsilon)| > B\varepsilon = \epsilon \iff \dd X(\varepsilon) \in \{-1, 1\}$. We may now form $w$ as the almost sure limit of $\PP\left(A_B(\varepsilon) \xmiddle| X(p)\right) / \varepsilon = \PP\left(\dd X(\varepsilon) \neq 0 \xmiddle| X(p)\right)$ and $Y$ as the limit of the conditional distribution of $X(p) + \dd X(\varepsilon)$ given $X(p)$ and $\dd X(\varepsilon) \neq 0$. By the above, $w$ is given conditionally on $X(p) = x$ as
\begin{equation}
w = w_+ + w_-,
\end{equation}
while the probability distribution of $Y$ is given conditionally on $X(p) = x$ as
\begin{align}
	\PP\left(Y = x+1 \xmiddle| X(p) = x\right) &= \frac{w_+}{w_+ + w_-}, \\
	\PP\left(Y = x-1 \xmiddle| X(p) = x\right) &= \frac{w_-}{w_+ + w_-}.
\end{align}

\begin{table}
  \caption{Stochastic derivatives of integer-valued stochastic programs}
  \label{derived-table}
  \centering
  \begin{tabular}{lll}
    \toprule
    &
    \multicolumn{2}{c}{Conditionally on $X(p) = x$}                   \\
    \cmidrule(r){2-3}
      $X(p)$  & $w_-$ & $w_+$ \\
    \midrule
      $\operatorname{Ber}(p)$ & $-1/p\cdot\ind\{x = 1\text{ and }\varepsilon < 0\}$ & $1/(1-p) \cdot \ind\{x = 0\text{ and }\varepsilon > 0\}$  \\
      $\operatorname{Bin}(n, p)$ & $-x/p \cdot \ind\{x > 0\text{ and }\varepsilon < 0\} $& $(n-x)/(1-p) \cdot \ind\{x < n\text{ and }\varepsilon > 0\}$  \\
      $\operatorname{Geo}(p)$ & $x/\left(p(1-p)\right) \cdot \ind\{x > 0\text{ and }\varepsilon > 0\}$ & $-(x+1)/p \cdot\ind\{\varepsilon < 0\}$  \\
      $\operatorname{Pois}(p)$ & $-x/p \cdot \ind\{x > 0\text{ and }\varepsilon < 0\}$ & $\ind\{\varepsilon > 0\}$ \\
    \bottomrule
  \end{tabular}
\end{table}
\cref{derived-table} lists the weights $w_+$ and $w_-$ for the Bernoulli, Binomial, Poisson, and Geometric distributions, derived using the parameterizations given in \cref{mass-table}. Using $w_+$ and $w_-$, we can easily express $w$ and $Y$ as above. Note that $w_+$ and $w_-$ for a Binomial variable $X(p) \sim \operatorname{Bin}(n,p)$ may be derived via representation as the sum of Bernoulli variables, with the primal evaluation $X(p) = x$ corresponding to the sum of $x$ Bernoulli successes and $n - x$ Bernoulli failures. 

The stochastic derivative for a Binomial variable $\operatorname{Bin}(n,p)$ has weight $w$ of order $n$ and $Y$ deterministically either $1$ or $-1$ depending on the chosen direction of the derivative. Therefore, the derivative estimator $w (Y - X(p))$ has variance of order $n$. In contrast, the score method gives an estimator
\begin{equation}
   \frac{X(p)(X(p) - np)}{ p(1 - p)} 
\end{equation}
which is also unbiased, but its variance
\begin{equation}
n^3 \frac{p}{1-p} + \mathcal{O}(n^2)
\end{equation}
is growing with cubic rate in $n$.

\end{example}

\begin{example}[Differentiably parameterized continuous stochastic programs]
\label{ex:repar}
	For continuous stochastic programs, our method reduces to the standard reparameterization trick. Specifically, for a differentiably parameterized continuous stochastic program $X(p)$, the stochastic derivative w.r.t a parameter $p$ of the distribution has the form $(\delta, 0, 0)$. To satisfy \cref{def:deriv}, a natural choice of bound is $B = |\delta| + 1$, so that $A_B^c(\varepsilon) = \{|\dd X(\varepsilon)| \leq \varepsilon |\delta| + \varepsilon\}$ simply requires that the higher-order correction to the derivative linearization be smaller than $\varepsilon$ when applied at a point at distance $\varepsilon$ from $p$. 
	
	For instance, the stochastic derivative of $X(p) \sim \operatorname{Exp}(p)$ w.r.t. $p$, the scale of the Exponential distribution, is
\begin{equation}
\left(\frac{x}{p}, 0, 0\right)    
\end{equation}
conditionally on $X(p) = x$. Such rules are already implemented in the Julia AD ecosystem in \texttt{Distributions.jl}~\citeSI{Distributions.jl-2019} and \texttt{DistributionsAD.jl}~\citeSI{JSSv098i16}, and may be used without modification as the continuous case is a special case of our formalism.
\end{example}

\begin{example}[Categorical variable]
Consider a categorical variable $\operatorname{Categorical}_{\mathbf{a}}(p_1, p_2, p_3, \dots, p_n)$ assuming fixed outputs $\mathbf{a} = a_1, a_2, \dots, a_n$ with probabilities $p_1, p_2, p_3, \dots, p_n > 0$. Assume that $p_1, p_2, p_3, \dots, p_n$ implicitly depend on a single parameter $p$, to fit our definition of stochastic derivative w.r.t. a single parameter, and let $X(p) \sim \operatorname{Categorical}_{\mathbf{a}}(p_1, p_2, p_3, \dots, p_n)$. 

Suppose $X(p)$ is parameterized via the inversion method with outputs ordered as $a_1, a_2, \dots a_n$.
Similar logic to \cref{ex:int} then yields that its stochastic derivative is of the form $(0, w, Y)$ with $Y \in \{a_{x-1}, a_{x+1}\}$, and
\begin{align}
   w \cdot \PP\left(Y = a_{x+1}\right) &= w_+ = \frac{\left|\sum_{i=1}^{x} \partial_p p_i\right|}{p_x} \cdot \ind\left\{\varepsilon \cdot \sum_{i=1}^x \partial_p p_i < 0\right\} , \\
   w \cdot \PP\left(Y = a_{x-1}\right) &= w_- = \frac{\left|\sum_{i=1}^{x-1} \partial_p p_i\right|}{p_x} \cdot \ind\left\{\varepsilon \cdot \sum_{i=1}^{x-1} \partial_p p_i > 0\right\}.
\end{align}
In theory, $a_1, a_2, \dots, a_n$ can be arbitrary discrete objects such as arrays or strings, as the categorical variable could represent an intermediate value of the program that is ultimately converted into a number. (Interpreted strictly, our formalism requires an embedding into Euclidean space; one can imagine a trivial embedding $a_i \mapsto i$ which is then provided to a learned embedding.) In handling such cases, one must be careful to speak in terms of \emph{alternate} outputs rather than perturbations to the output. Our formalism of stochastic derivatives can already accommodate this idea, as $Y$ takes on the alternate values rather than the values of the perturbations.

\end{example}
We now consider some general stochastic programs. Their stochastic derivatives arise automatically from composition and need not be hand-derived, but we do so to illustrate \cref{prop:composition}.

\begin{example}[Bernoulli plus Exponential]
Consider the program
\begin{align}
    X_1(p) &\sim \operatorname{Ber}(p), \\ 
    X_2(p) &\sim \operatorname{Exp}(p), \\
    X(p) &= X_1(p) + X_2(p) \sim \operatorname{Ber}(p) + \operatorname{Exp}(p).
\end{align}
Then, $X(p)$ has right (left) stochastic derivative,
\begin{equation}
   \left(\frac{x_2}{p}, w_1, Y_1\right) 
\end{equation}
conditionally on $X_1(p) = x_1$ and $X_2(p) = x_2$, where $w_1$ and $Y_1$ are given conditionally on the Bernoulli output $X_1(p) = x_1$ for $\varepsilon > 0$ ($\varepsilon < 0$) by \cref{derived-table}.
\end{example}

\begin{example}[Cubing a geometric]
Consider the program
\begin{align}
    X_1(p) &\sim \operatorname{Geo}(p), \\
    X(p) &= X_1(p)^3.
\end{align}
We know from \cref{ex:int} that $X_1(p)$ has right (left) stochastic derivative $(0, w_1, Y_1)$ with $w_1$ and $Y_1$ given by \cref{derived-table} for $\varepsilon > 0$ ($\varepsilon < 0$). In particular, $Y_1 \in \{x_1 -1, x_1+1\}$ conditionally on $X_1(p) = x_1$. Thus, intuitively, when propagating through the cube function we only care about the discretely spaced alternate values $(x_1-1)^3$ and $(x_1+1)^3$; the conventional derivative $3x_1^2$ is irrelevant as the input to the cube function is integer-valued. Indeed, by \cref{prop:composition} the stochastic derivative of $X(p)$ reads
\begin{equation}
   (0, w_1, Y_1^3), 
\end{equation}
where $Y_1^3 \in \{(x_1-1)^3, (x_1+1)^3\}$.
\end{example}

\begin{example}[Parameter-scaled Bernoulli]
Consider the program,
\begin{align}
    X_1(p) &\sim \operatorname{Ber}(p), \\
    X(p) &= p \cdot X_1(p).
\end{align}
By \cref{prop:composition}, the stochastic derivative of $X(p)$ reads,
\begin{equation}
	(X_1(p), w_1, pY_1),
\end{equation}
	where $w_1$ and $Y_1$ are given for $X_1(p)$ by \cref{derived-table}.
\end{example}

\begin{example}[Two-step random walk]
Consider the program,
\begin{align}
    X_1(p) &\sim \operatorname{Ber}(p), \\
    X_2(x_1) &\sim 
    \begin{cases}
    \operatorname{Ber}(p) & \text{if }x_1 = 0 \\
    \operatorname{Ber}(2p) & \text{if } x_1 = 1.
    \end{cases} \\
    X(p) &= X_1(p) + (X_2 \circ X_1)(p).
\end{align}
This represents a two-step random walk, where there is a transition $0 \to 1$ with probability $p$, a transition $1 \to 2$ with probability $2p$, and self loops otherwise. In this example, let us focus on a particular primal evaluation where $X_1(p) = 0$ and $X_2(X_1(p)) = 0$, so that $X(p) = 0$, and consider only the right stochastic derivative for simplicity. Conditionally on $X_1(p) = 0$, $X_1(p)$ has a right stochastic derivative $(\delta_1, w_1, Y_1)$ given by
\begin{equation}
    \left(0, \frac{1}{1-p}, 1\right),
\end{equation}
using \cref{derived-table}. Similarly, conditionally on $X_2(X_1(p)) = X_2(0) = 0$, $X_2$ has right stochastic derivative $(\delta_2, w_2, Y_2)$ given by
\begin{equation}
    \left(0, \frac{1}{1-p}, 1\right).
\end{equation}
Let us now turn our attention to the stacked program $[X_1(p); (X_2 \circ X_1)(p)]$. By \cref{prop:composition}, the stacked program has right stochastic derivative $(0, w, Y_{12})$ where conditionally on $X_1(p) = 0$ and $X_2(0) = 0$,
\begin{equation}
    Y_{12} \quad =
    \quad \begin{cases}
    [1; X_2(1)] & \text{ with probability }\quad 1/2, \\
    [0; 1] & \text{ with probability }\quad 1/2,
    \label{eq:cases12a}
    \end{cases}
\end{equation}
and $w = \frac{2}{1-p}$. Note that conditionally on $X_2(0) = 0$, $X_2(1)$ has a chance $\frac{1-2p}{1-p}$ of also being 0, but a $\frac{p}{1-p}$ chance of flipping to 1, so that the first case expands into two events
\begin{equation}
    Y_{12} \quad =
    \quad \begin{cases}
    [1; 1] & \text{ with probability }\quad \frac{p}{2(1-p)}, \\
    [1; 0] & \text{ with probability }\quad \frac{1-2p}{2(1-p)}, \\
    [0; 1] & \text{ with probability }\quad 1/2.
    \label{eq:cases12b}
    \end{cases}
\end{equation}
Finally, the expression for $X(p)$ may be thought of as a unary sum function operating on this stacked program, with stochastic derivative $(0, w, Y)$ where conditionally on $X_1(p) = 0$ and $X_2(0) = 0$ 
\begin{equation}
    Y \quad =
    \quad \begin{cases}
    2 & \text{ with probability }\quad \frac{p}{2(1-p)}, \\
    1 & \text{ with probability }\quad \frac{1-2p}{2(1-p)} + 1/2.
    \label{eq:cases12c}
    \end{cases}
\end{equation}
\label{ex:linking}
\end{example}

Finally, we present an example using smoothed stochastic derivatives to rederive a popular gradient estimator.

\begin{example}[Recovering the straight-through gradient estimator of \cite{bengio}]
\label{ex:st}
The straight-through gradient estimator formally assigns a derivative of 1 to the hard-thresholding function $\operatorname{HT} = \ind_{[0,\infty)}$, such that a Bernoulli variable $\operatorname{Ber}(p) \sim \operatorname{HT}(p - U[0,1])$ is assigned a derivative of 1. Now, let $\tilde{\delta}_L$ and $\tilde{\delta}_R$ be the left and right smoothed stochastic derivatives of $\operatorname{Ber}(p)$. By \cref{derived-table} and \cref{def:smooth}, for the right-sided ($\varepsilon > 0$) case,
\begin{equation}
    \tilde{\delta_R} = 1/(1-p) \cdot \ind\{x = 0\},
\end{equation}
and for the left-sided ($\varepsilon < 0$) case,
\begin{equation}
    \tilde{\delta_L} = 1/p \cdot \ind\{x = 1\}.
\end{equation}
By linearity of expectation applied to \cref{def:smooth}, any affine combination of $\tilde{\delta_L}$ and $\tilde{\delta_R}$ is also a valid smoothed stochastic derivative. In particular,
\begin{equation}
   1 = (1 - p) \cdot \tilde{\delta}_L + p \cdot \tilde{\delta}_R,
   \label{eq:affine}
\end{equation}
is a valid smoothed stochastic derivative of a Bernoulli variable, which explains why the straight-through estimator provides a low-bias estimate. Note that a constant-valued affine combination of the left and right smoothed stochastic derivatives is not possible in general, e.g. it is not possible for Geometric and Poisson random variables, in which cases smoothed stochastic derivatives generalize the straight-through estimator (mixing $\tilde{\delta}_R$ and $\tilde{\delta}_L$ may still be useful in these cases to reduce bias.) 

\end{example}

\section{Proofs}
\label{sec:proofs}

\subsection{Preliminaries}
Recall from the main text,
\newtheorem*{defcopy2}{\cref{def:program}}
\begin{defcopy2}
A \emph{stochastic program} $X(p)$ is a stochastic process with values in a Euclidean space $E$, whose index set $I$ is either an open subset of a Euclidean space or a closed real interval.\end{defcopy2}
Throughout the formalism, we let $X(p)$ denote such a stochastic program, where $I = [a,b] \subset \mathbb{R}$ is a closed interval. As noted in the main text, it is sufficient to consider this case because sensitivities of stochastic programs $Z$ with more general index sets can be understood for an input $\mathbf{u}$ by studying at $p = 0$ the directional perturbation $X(p) = Z(\mathbf{u} + p \mathbf{v})$ in a direction $\mathbf{v}$, where $X(p)$ is then a stochastic program with index set a closed interval containing 0.

As in the main text, we use the shorthand
\begin{equation}
    \dd X(\varepsilon) = X(p + \varepsilon) - X(p).
\end{equation}
A number of statements and propositions have identical forms for right and left stochastic derivatives, only differing in the direction of the limits. To accomodate this, we often use the notation $\varepsilon \to 0^{+/-}$ to indicate that statements and proofs for right and left stochastic derivatives can both be read off by choosing the appropriate side of the limit for all expressions, where objects such as $w$ have different definitions depending on the chosen reading  $\epsilon \to 0^+$ or $\epsilon \to 0^-$.

For clarity, we remark upon the (standard) notation,\begin{equation}
    W = \EE\left[U \xmiddle| V\right],
\end{equation}
where $U$ and $V$ are both random variables defined on a sample space $\Omega$. In this setting, $W$ is itself also a random variable defined on $\Omega$. In particular, for a particular sample $\omega \in \Omega$, $W(\omega)$ is the conditional expectation $\EE\left[U \xmiddle| V = V(\omega)\right]$. In the language of $\sigma$-algebras, $\EE\left[U \xmiddle| V\right]$ is equivalent to $\EE\left[U \xmiddle| \sigma(V)\right]$, where $\sigma(V)$ is the sub-$\sigma$-algebra generated by $V$.

\subsection{Unbiasedness of stochastic derivatives}

We first prove a general result regarding when a class of events $A(\varepsilon)$ allows for the pathwise gradient estimator $\delta$ to be applied when the probability space is restricted to $A^c(\varepsilon)$.

\begin{proposition}\label{prop:repara}
Suppose $\dd X(\varepsilon)/\varepsilon \to \delta$ almost surely, and $|\dd X(\varepsilon)| \leq B |\varepsilon|$ holds given the event $A^c(\varepsilon)$, where $B > |\delta|$ is integrable. 
Then, for any function $f\colon E \to \mathbb{R}$ with bounded derivative,  
\begin{equation}
\lim_{\varepsilon \to 0^{+/-}}\EE \left[\frac{f(X(p+\varepsilon)) - f(X(p))}{\varepsilon} \ind_{A^c(\varepsilon)} \xmiddle| X(p)\right] = \EE\left[f'(X(p))\delta \xmiddle| X(p)\right].
\end{equation}
\end{proposition}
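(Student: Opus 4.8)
The plan is to prove this as a conditional dominated convergence statement. The guiding observation is that once the probability space is restricted to $A^c(\varepsilon)$, the scaled increment $\dd X(\varepsilon)/\varepsilon$ is bounded by $B$ and still converges almost surely to $\delta$, so the difference quotient of $f$ along $X$ behaves exactly like an ordinary pathwise derivative. Concretely I would (i) exhibit the almost-sure pointwise limit of the integrand, (ii) produce an integrable dominating random variable, and (iii) invoke conditional dominated convergence to pass the limit inside $\EE[\,\cdot \mid X(p)]$.

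For the pointwise limit, I would first argue that $\ind_{A^c(\varepsilon)} \to 1$ almost surely: since $\dd X(\varepsilon)/\varepsilon \to \delta$ a.s. and $B > |\delta|$ \emph{strictly}, for almost every $\omega$ one eventually has $|\dd X(\varepsilon)| < B|\varepsilon|$, so $\omega$ eventually lies in the no-jump event. On this event set $h_\varepsilon = \dd X(\varepsilon)$, which satisfies $|h_\varepsilon| \leq B|\varepsilon| \to 0$. Using differentiability of $f$ at $x = X(p)$, I would expand $f(x + h_\varepsilon) - f(x) = f'(x) h_\varepsilon + r(h_\varepsilon)$ with $|r(h_\varepsilon)|/|h_\varepsilon| \to 0$, and divide by $\varepsilon$ to get $f'(x)\,(h_\varepsilon/\varepsilon) + r(h_\varepsilon)/\varepsilon$. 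The first term converges to $f'(x)\delta$ because $h_\varepsilon/\varepsilon \to \delta$, while the remainder is controlled by $\bigl(|r(h_\varepsilon)|/|h_\varepsilon|\bigr)\,\bigl(|h_\varepsilon|/|\varepsilon|\bigr) \leq B\,|r(h_\varepsilon)|/|h_\varepsilon| \to 0$. The reason to use this first-order expansion rather than a mean-value theorem at an intermediate point is that it only invokes differentiability of $f$ at the single point $X(p)$, so no continuity of $f'$ is needed.

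For domination, I would use that on $A^c(\varepsilon)$ we have $|\dd X(\varepsilon)| \leq B|\varepsilon|$ and that $f$ is Lipschitz with constant $\|f'\|_\infty < \infty$, so the integrand is bounded in absolute value by $\bigl|f(X(p+\varepsilon)) - f(X(p))\bigr|/|\varepsilon|\cdot \ind_{A^c(\varepsilon)} \leq \|f'\|_\infty\, B$, which is integrable precisely because $B$ is integrable. With the a.s. limit $f'(X(p))\delta$ from the previous step and this uniform integrable bound, the conditional dominated convergence theorem yields $\EE\left[\,\cdot \mid X(p)\right]$ of the integrand converging to $\EE\left[f'(X(p))\delta \mid X(p)\right]$, which is the claim; taking $f$ the identity then feeds directly into \cref{prop:unbiased}.

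The step I expect to be the main obstacle is the interchange of the \emph{continuous} limit $\varepsilon \to 0^{+/-}$ with the conditional expectation. Sequential conditional dominated convergence is standard, but promoting it to a continuous parameter requires reducing to arbitrary sequences $\varepsilon_n \to 0$ and controlling the (sequence-dependent) exceptional null sets. The cleanest robust route is to first obtain $L^1$ convergence of the integrand by ordinary dominated convergence (whose continuous-parameter form reduces cleanly to sequences) and then use that conditional expectation is an $L^1$ contraction, which is enough for the downstream applications. A secondary point is ensuring $\ind_{A^c(\varepsilon)} \to 1$ almost surely for the relevant event; for a single program this follows from the a.s. convergence of $\dd X(\varepsilon)/\varepsilon$ and the strict inequality $B > |\delta|$, whereas in the compositional setting of the chain rule one argues it component-by-component.
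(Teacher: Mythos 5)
Your proposal is correct and follows essentially the same route as the paper's proof: establish that $\ind_{A^c(\varepsilon)} \to 1$ almost surely from the a.s.\ convergence of $\dd X(\varepsilon)/\varepsilon$ to $\delta$ and the strict bound $B > |\delta|$, obtain the a.s.\ pointwise limit $f'(X(p))\delta$ of the integrand, dominate it by the integrable random variable $\|f'\|_\infty B$ via the Lipschitz bound $|f(X(p+\varepsilon)) - f(X(p))| \le \|f'\|_\infty |\dd X(\varepsilon)| \le \|f'\|_\infty B |\varepsilon|$ on $A^c(\varepsilon)$, and conclude by conditional dominated convergence. Your explicit first-order expansion at $X(p)$ and the remark on reducing the continuous limit $\varepsilon \to 0^{+/-}$ to sequences are careful elaborations of steps the paper treats more tersely, but they do not constitute a different argument.
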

\begin{proof}

Since $\dd X(\varepsilon)/\varepsilon \to \delta$ almost surely and $B > |\delta|$, $|\dd X(\varepsilon)| \leq B|\varepsilon|$ holds almost surely as $\varepsilon \to 0$. So $\ind_{A^c(\varepsilon)} \to 1$ almost surely as $\varepsilon \to 0$. By the chain rule,
\begin{equation}
\frac{f(X(p+\varepsilon)) - f(X(p))}{\varepsilon}\ind_{A^c(\varepsilon)} \to f'(X(p))\delta.
\end{equation}
almost surely as $\varepsilon \to 0^{+/-}$.
Furthermore,
\begin{align}
\left|\frac{f(X(p+\varepsilon)) - f(X(p))}{\varepsilon} \ind_{A^c(\varepsilon)}\right|
&\le \frac1\varepsilon{\|f'\|_\infty|\dd X(\varepsilon)|} \ind_{A^c(\varepsilon)} \\
&\leq B \|f'\|_\infty =: G.
\end{align}
By assumption $\EE\, G < \infty.$ 
The proposition follows by the dominated convergence theorem for conditional expectations.
\end{proof}
As in the main text, define
\begin{equation}\label{eq:finitechange}
    A_B(\varepsilon) =  \{|\dd X(\varepsilon)| > B|\varepsilon|\}\end{equation}
and recall
\newtheorem*{defcopy}{\cref{def:deriv}}
\begin{defcopy}[Stochastic derivative]
Suppose $X(p) \in E$ is a stochastic program with index set $I$ a closed real interval. 
We say that the triple of random variables $(\delta, w, Y)$, with $w \in \mathbb{R}$ and $Y \in E$, is a right (left) \emph{stochastic derivative} of $X$ at the input $p \in I$ if $\dd X(\varepsilon) / \varepsilon \to \delta$ almost surely as $\varepsilon \to 0$, and there is an integrable random variable $B > |\delta|$ such that for all bounded functions $f\colon E \to \mathbb{R}$ with bounded derivative it holds almost surely that
\begin{equation}
\tag{\ref{eq:deriv}}
\EE\left[w \left(f(Y) - f(X(p))\right) \xmiddle| X(p) \right] = \lim_{\varepsilon \to 0^{+/-}} \EE\left[\frac{f(X(p + \varepsilon)) - f(X(p))}{\varepsilon} \ind_{A_B(\varepsilon)} \xmiddle| X(p)\right],
\end{equation}
with limit taken from above (below), where $\PP\left(A_B(\varepsilon) \xmiddle| X(p)\right)/\varepsilon$ is dominated by an integrable random variable for all $\varepsilon > 0$ ($\varepsilon < 0$). 
\end{defcopy}

Using \cref{prop:repara}, we may show that outside the event $A_B(\varepsilon)$, the sensitivity is well-described by the pathwise gradient estimator, allowing us to prove \cref{prop:unbiased}.
\newtheorem*{propcopy2}{\cref{prop:unbiased}}
\begin{propcopy2}[Unbiasedness]
If $(\delta, w, Y)$ is a stochastic derivative of $X(p)$ at $p$, it holds that
\begin{equation}
\tag{\ref{eq:unbiased}}
\frac{\dd\EE \left[X(p)\right]}{\dd p} = \EE[\delta + w(Y-X(p))].
\end{equation}
\end{propcopy2}
\begin{proof}
Let $B$ be the associated bound. Since $\dd X(\varepsilon) / \varepsilon \to \delta$ almost surely as $\varepsilon \to 0$ and $A_B^c(\varepsilon)$ implies ${\dd X(\varepsilon) / \varepsilon \leq B\varepsilon}$, \cref{prop:repara} applied to $A_B(\varepsilon)$ with $f$ as identity implies
\begin{align}
    \label{eq:pieceone}
    \lim_{\varepsilon \to 0^{+/-}}\EE\left[\frac{X(p + \varepsilon) - X(p)}{\varepsilon} \ind_{A_B^c(\varepsilon)} \xmiddle| X(p)\right] = \EE\left[\delta \xmiddle| X(p)\right],
\end{align}
while \cref{eq:deriv} of \cref{def:deriv} also applied with $f$ as identity gives
\begin{align}
    \label{eq:piecetwo}
    \lim_{\varepsilon \to 0^{+/-}}\EE\left[\frac{X(p + \varepsilon) - X(p)}{\varepsilon} \ind_{A_B(\varepsilon)} \xmiddle| X(p)\right] = \EE\left[w(Y - X(p)) \xmiddle| X(p)\right].
\end{align}
Summing \cref{eq:pieceone} and \cref{eq:piecetwo}, we have
\begin{align}
    \lim_{\varepsilon \to 0^{+/-}}\EE\left[\frac{X(p + \varepsilon) - X(p)}{\varepsilon} \xmiddle| X(p)\right] &= \EE\left[\delta + w(Y - X(p)) \xmiddle| X(p)\right],
\end{align}
and applying the tower property of conditional expectations, we obtain
\begin{equation}
    \frac{\dd \EE[X(p)]}{\dd p} = \lim_{\varepsilon \to 0^{+/-}}\EE\left[\frac{X(p + \varepsilon) - X(p)}{\varepsilon}\right] = \EE[\delta + w(Y-X(p))] = \EE[\tilde{X}(p)],
\end{equation}
as desired.
\end{proof}

\subsection{Construction of stochastic derivatives for elementary programs}

We now give a technical condition under which a stochastic derivative can be constructed for an elementary stochastic program, in which we characterize $w$ as the derivative of the conditional probability of $A_B(\varepsilon)$ given $X(p)$, and $Y$ as realization of the weak limit of the conditional distribution of $X(p+\varepsilon)$ given $X(p)$ and $A_B(\varepsilon)$. 

\begin{assumption}\label{ass:limitingY}
We assume that $X(p)$ is almost surely differentiable, so that $\dd X(\varepsilon) / \varepsilon \to \delta$ almost surely as $\varepsilon \to 0$ for some~$\delta$, and that we may find an integrable random bound $B > |\delta|$ such that
\begin{itemize}
    \item for the quantity
    \begin{equation}
        w(\varepsilon) = \PP\left(A_B(\varepsilon) \mid X(p)\right),
    \end{equation}
    $w(\varepsilon)/\varepsilon$ is dominated in $\varepsilon$ by an integrable random variable and converges almost surely to a random variable $w$ as $\varepsilon \to 0^{+/-}$.\item the conditional distribution of $X(p+\varepsilon)$ given $X(p)$ and $A_B(\varepsilon)$ converges in distribution to the distribution of a random variable $Y$ as $\varepsilon \to 0^{+/-}$. Specifically, $Y$ must satisfy\begin{equation}\label{eq:limitingY}
    \EE\left[w \cdot f(Y) \xmiddle| X(p) \right] = \lim_{\varepsilon \to 0^{+/-}} \EE\left[w \cdot f(X(p+\varepsilon)) \xmiddle| X(p), A_B(\varepsilon)\right]
\end{equation}
for all bounded continuous functions $f\colon E\to \RR$.
\end{itemize}
\end{assumption}

Given \cref{ass:limitingY}, we may construct a stochastic derivative for $X(p)$. 
\newtheorem*{thmcopy}{\cref{thm:main}}
\begin{thmcopy}
Given a stochastic program $X(p)$ satisfying \cref{ass:limitingY}, there exists a right stochastic derivative $(\delta, w_R, Y_R)$ with $w_R \geq 0$ at any $p \in [a,b)$ and a left stochastic derivative $(\delta, w_L, Y_L)$ with $w_L \le 0$ at any $p \in (a,b]$.
\label{thm:mainfull}
\end{thmcopy}
\begin{proof}
Construct $(\delta, w, Y)$ via \cref{ass:limitingY} (taking the limit from above for the case of right stochastic derivatives, and the limit from below for the case of left stochastic derivatives). For bounded $f$ with bounded derivative, write
\begin{align}
&\EE \left[\frac{f(X(p+\varepsilon)) - f(X(p))}{\varepsilon} \ind_{A_B(\varepsilon)} \xmiddle| X(p) \right] \\ &\quad=
\EE\left[w \left(f(X(p+\varepsilon)) - f(X(p))\right) \frac{\ind_{A_B(\varepsilon)}}{w(\varepsilon)} \xmiddle | X(p) \right] \notag \\
&\qquad+ \EE\left[\left(\frac{w(\varepsilon)}{\varepsilon} - w\right)  \left(f(X(p+\varepsilon)) - f(X(p))\right) \frac{\ind_{A_B(\varepsilon)}}{w(\varepsilon)} \xmiddle| X(p) \right].
\end{align}
Note that $\EE[\ind_{A_B(\varepsilon)} \mid X(p)] = \PP(A_B(\varepsilon) \mid X(p)) = w(\varepsilon)$. Thus, as $\varepsilon \to 0^{+/-}$ the first term approaches
\begin{align}
&\EE\left[w \left(f(X(p+\varepsilon)) - f(X(p))\right) \frac{\ind_{A_B(\varepsilon)}}{w(\varepsilon)} \xmiddle| X(p) \right]
\\
&\qquad=
\EE\left[w  \left(f\left(X(p+\varepsilon)\right) - f(X(p))\right) \xmiddle| X(p), A_B(\varepsilon) \right] \\
&\qquad\to \EE\left[w \left(f(Y) - f(X(p))\right) \xmiddle| X(p)\right]
\end{align}
by \cref{ass:limitingY}. 
Now, using the $X(p)$-measurability of $w(\varepsilon)$ and $w$, we may bound the magnitude of the second term as\begin{align}
&\left|\EE\left[\left(\frac{w(\varepsilon)}{\varepsilon} - w\right)  \left(f(X(p+\varepsilon)) - f(X(p))\right) \frac{\ind_{A_B(\varepsilon)}}{w(\varepsilon)} \xmiddle| X(p)\right]\right|
\\
&\qquad\leq 2\|f\|_\infty \cdot  \left|\frac{w(\varepsilon)}{\varepsilon} - w\right|  \cdot \EE\left[ \frac{\ind_{A_B(\varepsilon)}}{w(\varepsilon)}\xmiddle| X(p)\right] 
\\
&\qquad= 2\|f\|_\infty \cdot  \left|\frac{w(\varepsilon)}{\varepsilon} - w\right|,
\end{align}
which approaches 0 almost surely as $\varepsilon \to 0^{+/-}$ by the almost sure convergence of $w(\varepsilon)/\varepsilon$ to $w$. Thus,
\begin{align}
    \EE\left[w \left(f(Y) - f(X(p)\right) \xmiddle| X(p)\right] = \lim_{\varepsilon \to 0^{+/-}} \EE \left[\frac{f(X(p+\varepsilon)) - f(X(p))}{\varepsilon} \ind_{A_B(\varepsilon)} \xmiddle| X(p) \right].
\end{align}
Additionally,   $\PP\left(A_B(\varepsilon) \xmiddle| X(p)\right)/\varepsilon$ is dominated in $\varepsilon$ by an integrable random variable 
 and $\dd X(\varepsilon)/\varepsilon \to \delta$ almost surely by assumption. So $(\delta, w, Y)$ is a stochastic derivative of $X$ at $p$. Finally, note that $w(\varepsilon)/\varepsilon \geq 0$ for $\varepsilon > 0$ while $w(\varepsilon)/\varepsilon \leq 0$ for $\varepsilon < 0$, by the non-negativity of $w(\varepsilon)$. So applying \cref{ass:limitingY} with limits from above for $p \in [a,b)$ indeed produces a right stochastic derivative, while applying \cref{ass:limitingY} with limits from below for $p \in (a,b]$ produces a left stochastic derivative.
\end{proof}

\subsection{Composition of stochastic derivatives}

We now provide a proof of the chain rule for stochastic derivatives.

\newtheorem*{compcopy}{\cref{prop:composition}}
\begin{compcopy}[Chain rule]
Consider independent stochastic programs $X_1$ and $X_2$ and their composition $X_2 \circ X_1$. Suppose that $X_1$ has a right (left) stochastic derivative at $p \in \mathbb{R}$ given by $(\delta_1, w_1, Y_1)$ with bound $B_1$, and $X_2$ has a right stochastic derivative $(\delta_2, w_2, Y_2)$ in the direction\footnote{when $\delta_1 = 0$, we may choose $\widehat{\delta}_1$ arbitrarily, as $\delta_2, w_2$ and $Y_2$ ultimately have no contribution to the stochastic derivative in this case.} $\widehat{\delta}_1 = \delta_1 / |\delta_1|$ with bound $B_2$ given conditionally on its input $X_1(p)$, where with $\dd X_2(\mathbf{v}) = X_2(X_1(p) + \mathbf{v}) - X_2(X_1(p))$ the event
\begin{equation}
\tilde{A}_2(\varepsilon) = \{|\dd X_2(\mathbf{v})| > B_2|\mathbf{v}|\text{ for some }\mathbf{v}\text{ satisfying }|\mathbf{v}| \leq \varepsilon\}
\end{equation}
has $\PP\left(\tilde{A}_2(\varepsilon) \xmiddle| X_1(p), X_2(p)\right) \big/\varepsilon$ dominated in $\varepsilon$ by an integrable random variable $W$, and $\dd X_2(\mathbf{v})$ is almost surely differentiable with respect to $\mathbf{v}$.
\ Then, if $B_1 B_2$ and $|\delta_1| W$ are integrable, the stacked program $[X_1; X_2 \circ X_1]$ has a right (left) stochastic derivative at $p$ given by $(\delta, w, Y)$ where $\delta = \left[\delta_1 ; |\delta_1|\delta_2\right]$, \begin{equation} Y \quad = \quad \begin{cases}
\tag{\ref{eq:cases}}
    [Y_1; X_2(Y_1)] & \text{ with probability }\quad\dfrac{w_1}{w_1 + |\delta_1| w_2}, \\
    [X_1(p); Y_2] & \text{ with probability }\quad \dfrac{|\delta_1| w_2}{w_1 + |\delta_1| w_2},
    \end{cases}
\end{equation}
and $w = w_1 + |\delta_1| w_2$, with associated bound $B =B_1 +  B_1 B_2$. 
\end{compcopy}
\begin{proof}
    We prove the case of right stochastic derivatives, as the case of left stochastic derivatives is analogous. Let $X = X_2 \circ X_1$ as a shorthand. Now, for $\varepsilon > 0$ consider the events
\begin{align}
    A_1(\varepsilon) &=  \{|\dd X_1(\varepsilon)| > B_1 \varepsilon\},\\
A_2(\varepsilon) &= \{|\dd X_2(\varepsilon \delta_1)| >  |\delta_1|B_2\varepsilon\}, \\ 
    A(\varepsilon) &=  \{|\dd X(\varepsilon)| > B_1 B_2 \varepsilon\}.
\end{align}

Intuitively, the events $A_1(\varepsilon)$, $A_2(\varepsilon)$, and $A(\varepsilon)$ represent large jumps in $X_1$, $X_2$, and $X$ respectively. Throughout the proof, we let $f$ denote a bounded function with bounded derivative. 

\underline{Step 1: The composed program is almost surely differentiable.}

Since $X_1(p)$ and $\dd X_2(\mathbf{v})$ are almost surely differentiable, the chain rule implies that $X(p) = X_2(X_1(p))$ is almost surely differentiable, with derivative $|\delta_1| \delta_2$, i.e. $\dd X(\varepsilon)/\varepsilon \to |\delta_1| \delta_2$ almost surely as $\varepsilon \to 0$. Additionally, the directional perturbation $\varepsilon \mapsto X_2(X_1(p) + \varepsilon \delta_1)$ is also almost surely differentiable by the almost-sure differentiability of $X_1(p)$ and the definition of $\delta_2$, with the same derivative $|\delta_1| \delta_2$.

\underline{Step 2: No large jump in $X_1$ or $X_2$ implies no large jump in $X$.}

Given $A_1^c(\varepsilon)$ and $\tilde{A}_2^c(\varepsilon |\delta_1|)$, we have 
\begin{align}
    |\dd X(\varepsilon)| &= |\dd X_2(\dd X_1(\varepsilon))| \\
    \label{eq:goodbound}
                    &\leq B_1 B_2 \varepsilon,
\end{align}
which implies $A^c(\varepsilon)$. We now remark that $\tilde{A}_2(\varepsilon |\delta_1|)$ and $A_2(\varepsilon)$ may be interchanged when considering the sensitivity of the directional perturbation $\varepsilon \mapsto X_2(X_1(p) + \varepsilon \delta_1)$. To see this, note that 
\begin{equation}
\tilde{A}_2^c(\varepsilon |\delta_1|) \subseteq A_2^c(\varepsilon) = \{\left|X_2(X_1(p) + \varepsilon \delta_1) - X(p)\right| \leq  |\delta_1| B_2 \varepsilon\}.
\end{equation}
Thus, since $X_2(X_1(p) + \varepsilon \delta_1)$ is almost surely differentiable,  we may apply the dominated convergence argument of \cref{prop:repara} to both $A_2(\varepsilon)$ and $\tilde{A}_2(\varepsilon |\delta_1|)$, obtaining
\begin{align}
    &\lim_{\varepsilon \to 0^+} \EE\left[\frac{f(X_2(X_1(p)+\varepsilon\delta_1)) - f(X(p))}{\varepsilon} \ind_{A_2^c(\varepsilon)}\xmiddle| X_1(p), X(p)\right] \\
    &\qquad=\lim_{\varepsilon \to 0^+} \EE\left[f'(X_2(X_1(p))) \delta_2 |\delta_1| \xmiddle| X_1(p), X(p)\right] \\
    &\qquad= \lim_{\varepsilon \to 0^+} \EE\left[\frac{f(X_2(X_1(p)+\varepsilon\delta_1))) - f(X(p))}{\varepsilon}  \ind_{\tilde{A}_2^c(\varepsilon |\delta_1|)}\xmiddle| X_1(p), X(p)\right] \label{eq:exchange2}.
\end{align}

By a similar argument, it holds that $A_1(\varepsilon) \cup \tilde{A}_2(\varepsilon|\delta_1|)$ may be interchanged with $A(\varepsilon)$ when considering the sensitivity of $X_2 \circ X_1$. Specifically, since $X_2(X_1(p))$ is almost surely differentiable and both $A_1^c(\varepsilon) \cap \tilde{A}_2^c(\varepsilon|\delta_1|)$ and $A_2^c(\varepsilon)$ bound $|\dd X(\varepsilon)|$ by $B_1 B_2\varepsilon$, we may apply \cref{prop:repara} for $X = X_2 \circ X_1$ to both event classes, obtaining
\begin{align}
    &\lim_{\varepsilon \to 0^+} \EE\left[\frac{f(X(p+\varepsilon)) - f(X(p))}{\varepsilon} \ind_{A_1^c(\varepsilon) \cap \tilde{A}_2^c(\varepsilon|\delta_1|)}\xmiddle| X_1(p), X(p)\right] \\
    &\qquad=\lim_{\varepsilon \to 0^+} \EE\left[f'(X_2(X_1(p))) \delta_2 |\delta_1| \xmiddle| X_1(p), X(p)\right] \\
    &\qquad= \lim_{\varepsilon \to 0^+} \EE\left[\frac{f(X(p+\varepsilon)) - f(X(p))}{\varepsilon}  \ind_{A_2^c(\varepsilon)}\xmiddle| X_1(p), X(p)\right]. \label{eq:exchange3}
\end{align}

\underline{Step 3: A large jump in both $X_1$ and $X_2$ has negligible probability.} 

Note that $A_1(\varepsilon)$ and $\tilde{A}_2(\epsilon|\delta_1|)$ are independent conditional on $X_1(p)$. Therefore,
in the limit, we have 
\begin{align}
& \frac{\PP\left(A_1(\varepsilon) \cap \tilde{A}_2(\varepsilon|\delta_1|) \xmiddle| X_1(p), X(p)\right)}{\varepsilon} \\
&\qquad= \frac{\PP\left(A_1(\varepsilon) \xmiddle| X_1(p) \right)}{\varepsilon} \PP\left(\tilde{A}_2(\varepsilon|\delta_1|) \xmiddle| X_1(p), X(p) \right) \\
&\qquad\to 0
\end{align}
almost surely as $\varepsilon \to 0^+$ by the domination of $\PP\left(A_1(\varepsilon) \xmiddle| X_1(p)\right)  \big/ \varepsilon$ by an integrable random variable and that $\PP\left(\tilde{A}_2(\varepsilon|\delta_1|) \xmiddle| X_1(p), X(p)\right) \to 0$. By boundedness of $f$ this implies
\begin{equation}
    \label{eq:neglect}
    \lim_{\varepsilon \to 0^+} \EE\left[\frac{f(X(p+\varepsilon)) - f(X(p))}{\varepsilon} \ind_{A_1(\varepsilon) \cap \tilde{A}_2(\varepsilon|\delta_1|)} \xmiddle| X_1(p), X(p) \right] = 0,
\end{equation}
almost surely, so the sensitivity of $X$ is negligible under $A_1(\varepsilon) \cap \tilde{A}_2(\varepsilon|\delta_1|)$.

\underline{Step 4: Sensitivity of $X(p)$ given a large jump in $X_1$ is described by $w_1, Y_1$.}

The random variable
\begin{equation}
    Z(y) = \EE[f(X_2(y))   \mid X_1(p), X(p)]
\end{equation}
is $X_1(p)$, $X(p)$ measurable and almost surely continuous, and the function $f$ in the characterization \cref{eq:deriv} can be taken as bounded and almost surely continuous in $X_1(p)$ by an extension of the Portmanteau theorem \cite{MR1377542}.
It follows, as
$X(p)$ and $X_1(p+\epsilon)$ are independent given $X_1(p)$, that
\begin{align}
& \lim_{\varepsilon \to 0^+} \EE\left[\frac{Z(X_1(p+\varepsilon))-Z(X_1(p))}{\varepsilon} \ind_{A_1(\varepsilon)} \xmiddle| X_1(p), X(p) \right]
 \\   &\qquad= \EE\left[w_1\left( Z(Y_1)-Z(X_1(p))\right)  \xmiddle| X_1(p), X_2(p) \right],
\end{align}
so that
\begin{align}
&\lim_{\varepsilon \to 0^+} \EE\left[\frac{f(X(p + \varepsilon)) - f(X(p))}{\varepsilon}  \ind_{A_1(\varepsilon)}\xmiddle|X_1(p), X(p) \right]
\\&\qquad =
 \EE\left[w_1\left(f(X_2(Y_1)) - f(X(p))\right)\xmiddle|X_1(p), X(p) \right].
\end{align}

\underline{Step 5: Sensitivity of $X(p)$ given a large jump in $X_2$ is described by $w_2, Y_2$.} 

Using \cref{eq:goodbound}, given $\tilde{A}_2^c(\varepsilon|\delta_1|) \cap A_1^c(\varepsilon)$ it holds that
\begin{align}
\left|\frac{f\left(X_2(X_1(p) + \dd X_1(\varepsilon))\right) - f(X_2(p))}{\varepsilon}\right| \leq \|f'\|_\infty \frac{|\dd X(\varepsilon)|}{\varepsilon} \leq \|f'\|_\infty B_1 B_2
\end{align}
is dominated by an integrable random variable. Therefore, since $\dd X_1(\varepsilon)/\varepsilon \to \delta_1$ almost surely,
\begin{align}
    &\lim_{\varepsilon \to 0^+} \EE\left[\frac{f(X_2(X_1(p)+\dd X(\varepsilon))) - f(X(p))}{\varepsilon} \ind_{\tilde{A}_2^c(\varepsilon|\delta_1|) \cap A_1^c(\varepsilon)}\xmiddle| X_1(p), X(p)\right] \\
    \qquad&=
    \lim_{\varepsilon \to 0^+} \EE\left[\frac{f(X_2(X_1(p)+\varepsilon\delta_1)) - f(X(p))}{\varepsilon} \ind_{\tilde{A}_2^c(\varepsilon|\delta_1|) \cap A_1^c(\varepsilon)}\xmiddle| X_1(p), X(p)\right]
\end{align}
by dominated convergence. Putting this together with \cref{eq:exchange2}, by definition of $w_2$ and $Y_2$, 
\begin{align}
    &\EE\left[|\delta_1| w_2\left(f(Y_2) - f\big(X(p)\big)\right) \xmiddle| X_1(p), X(p) \right] \\
    &\quad = \lim_{\varepsilon \to 0^+} \EE\left[\frac{f(X_2(X_1(p)+\varepsilon\delta_1)) - f(X(p))}{\varepsilon} \ind_{A_2(\varepsilon)}\xmiddle| X_1(p), X(p)\right] \\
    &\quad = \lim_{\varepsilon \to 0^+} \EE\left[\frac{f(X_2(X_1(p)+\varepsilon\delta_1))) - f(X(p))}{\varepsilon}  \ind_{\tilde{A}_2(\varepsilon|\delta_1|)}\xmiddle| X_1(p), X(p)\right] \\
    &\quad = \lim_{\varepsilon \to 0^+} \EE\left[\frac{f(X_2(X_1(p)+\varepsilon\delta_1))) - f(X(p))}{\varepsilon}  \ind_{\tilde{A}_2(\varepsilon|\delta_1|) \cap A_1^c(\varepsilon)}\xmiddle| X_1(p), X(p)\right] \label{eq:beforedom} \\
    &\quad = \lim_{\varepsilon \to 0^+} \EE\left[\frac{f(X_2(X_1(p)+\dd X_1(\varepsilon))) - f(X(p))}{\varepsilon}  \ind_{\tilde{A}_2(\varepsilon|\delta_1|) \cap A_1^c(\varepsilon)}\xmiddle| X_1(p), X(p) \right] \label{eq:afterdom} \\
    &\quad = \lim_{\varepsilon \to 0^+} \EE\left[\frac{f(X(p+\varepsilon)) - f(X(p))}{\varepsilon}  \ind_{\tilde{A}_2(\varepsilon|\delta_1|)}\xmiddle| X_1(p), X(p)\right],
\end{align}
where we freely neglect (or add back in)  $\ind_{A_1(\varepsilon) \cap \tilde{A}_2(\varepsilon|\delta_1|)}$ in the conditional expectation by \cref{eq:neglect}.

\underline{Step 6: $(\delta, w, Y)$ is a stochastic derivative of the stacked program $[X_1(p); X(p)]$.}

Since $\delta_1$ is an almost-sure derivative of $X_1(p)$ and $|\delta_1| \delta_2$ is an almost-sure derivative of $X(p)$, $\delta = [\delta_1; |\delta_1|\delta_2]$ is an almost sure derivative of $[X_1(p); X(p)]$.

Given $A(\varepsilon)$, $|\dd X(\varepsilon)| > B_1 B_2 \varepsilon$, and given $A_1(\varepsilon)$, $|\dd X_1(\varepsilon)| > B_1 \varepsilon$. Thus we may choose the bound $B = B_1 B_2 + B_1$ for the stacked program, so that $A_B(\varepsilon) \subseteq A_1(\varepsilon) \cup A(\varepsilon)$.  Now, since $A(\varepsilon) \subseteq A_1(\varepsilon) \cup \tilde{A}_2(\varepsilon|\delta_1|)$,  
\begin{align}
\frac{\PP\left(A(\varepsilon) \xmiddle| X_1(p), X(p)\right)}{\varepsilon} &\leq \frac{\PP\left(A_1(\varepsilon) \xmiddle| X_1(p), X(p)\right)}{\varepsilon} + \frac{\PP\left(\tilde{A}_2(\varepsilon|\delta_1|) \xmiddle| X_1(p), X(p)\right)}{\varepsilon} \\
&\leq \frac{\PP\left(A_1(\varepsilon) \xmiddle| X_1(p), X(p)\right)}{\varepsilon} + |\delta_1| W,
\end{align}
where $\PP\left(A_1(\varepsilon) \xmiddle| X_1(p), X(p)\right) \big/ \varepsilon$ and $|\delta_1| W$ are integrable by assumption. Therefore, $\PP\left(A_B(\varepsilon) \xmiddle| X_1(p), X(p) \right)$ is also dominated by an integrable random variable, as desired.

Now, let $f$ operate on the stacked program, and for convenience write $f(x_1; x_2) = f([x_1; x_2])$. As a shorthand, let $\overline{X}(p) = [X_1(p); X(p)]$. With $w$ and $Y$ as given in the statement,\begin{align}
    &\EE\left[w\big(f(Y) - f(X_1(p); X(p))\big) \xmiddle| X_1(p), X(p) \right] \\
    &\quad=  \EE\left[w_1\big(f\left(X_1(p); X_2(Y_1)\right) - f(X_1(p); X(p))\big) \xmiddle| X_1(p), X(p) \right] \notag \\
    &\qquad +  \EE\left[|\delta_1| w_2\big(f(Y_1; X_2(Y_1)) - f(X_1(p); X(p))\big) \xmiddle| X_1(p), X(p) \right] \\
    &\quad = \lim_{\varepsilon \to 0^+} \EE\left[\frac{f(\overline{X}(p+\varepsilon)) - f(\overline{X}(p))}{\varepsilon}  \left(\ind_{A_1(\varepsilon)} + \ind_{\tilde{A}_2(\varepsilon|\delta_1|)} \right)\xmiddle|X_1(p), X(p) \right] \\
    &\quad = \lim_{\varepsilon \to 0^+} \EE\left[\frac{f(\overline{X}(p+\varepsilon)) - f(\overline{X}(p))}{\varepsilon}  \ind_{A_1(\varepsilon) \cup \tilde{A}_2(\varepsilon|\delta_1|)} \xmiddle|X_1(p), X(p) \right] \\
    &\quad = \lim_{\varepsilon \to 0^+} \EE\left[\frac{f(\overline{X}(p+\varepsilon)) - f(\overline{X}(p))}{\varepsilon}  \ind_{A_1(\varepsilon) \cup A(\varepsilon)}\xmiddle|X_1(p), X(p) \right] \\
    &\quad = \lim_{\varepsilon \to 0^+} \EE\left[\frac{f(\overline{X}(p+\varepsilon)) - f(\overline{X}(p))}{\varepsilon}  \ind_{A_B(\varepsilon)}\xmiddle|X_1(p), X(p) \right].
\end{align}
where we use \cref{eq:exchange3} in the penultimate equality, and the last equality follows from applying \cref{prop:repara} to both $A_B(\varepsilon)$ and $A_1(\varepsilon) \cup A(\varepsilon)$ as in \cref{eq:exchange3}, noting that $A_B^c(\varepsilon)$ bounds $|\dd \overline{X}(\varepsilon)|$ by $B\varepsilon$ and that $A_1^c(\varepsilon) \cap A_2^c(\varepsilon) \subseteq A_B^c(\varepsilon)$. We conclude that $(\delta, w, Y)$ is a valid stochastic derivative of $\overline{X}(p) = [X_1(p); X(p)]$. 

\end{proof}

\subsection{Unbiasedness of pruning strategy}
\label{sec:pruning_proof}
In the main text, we note that we can employ a pruning strategy so that we only ever track one alternative path (i.e. one sample from the stochastic derivative component $Y$) and yet still obtain an unbiased estimate. The following shows that the pruning method, whereby one chooses between two samples of $Y$ by picking one with probability proportional to its weight, is indeed unbiased.

We prove by induction that the currently tracked alternative path is an unbiased choice amongst all possible alternative paths seen so far. The base case, where the first alternative path observed is followed, is trivial as it is the only choice. Now, suppose that $n$ alternative branches have been observed so far, where $w = w_1 + w_2 + \dots + w_n$ is the summed weight so far. Suppose we observe an $(n+1)$th branch, with weight $w_{n+1}$. By our pruning strategy it is chosen with probability
\begin{equation}
    \frac{w_{n+1} }{w_{n+1} + w} = \frac{w_{n+1}}{w_1 + w_2 + \dots + w_{n+1}},
\end{equation}
while the $j$th path for some $j \leq n$ will have been chosen after this step with probability
\begin{equation}
    \frac{w_j}{w_1 + w_2 + \dots + w_n} \cdot \frac{w_1 + w_2 + \dots + w_n}{w_1 + w_2 + \dots + w_{n+1}} = \frac{w_j}{w_1 + w_2 + \dots + w_{n+1}},
\end{equation}
as desired for unbiasedness. Note that this enables us to prune \emph{online}, i.e.~without knowing the full structure of the computation a priori, which we exploit in \texttt{StochasticAD.jl} for $\mathcal{O}(1)$ memory overhead.

\subsection{Smoothing}
\label{app:smoothing}
Recall from the main text,
\newtheorem*{defsmoothcopy}{\cref{def:smooth}}
\begin{defsmoothcopy}[Smoothed stochastic derivative]
For a stochastic program $X(p)$ with a right (left) stochastic derivative $(\delta, w, Y)$ at input $p$, a right (left) smoothed stochastic derivative $\tilde{\delta}$ of $X$ at input $p$ is given as
\begin{equation}
\tag{\ref{eq:defsmooth}}
    \tilde \delta = \EE\left[\delta + w (Y- X(p)) \xmiddle| X(p)\right].\end{equation}
\end{defsmoothcopy}
Using \cref{def:deriv}, we can write the above in an alternative form that does not rely on the definition of the stochastic derivative,
\begin{align}
    \tilde \delta  &= \lim_{\varepsilon \to 0^{+/-}} \frac{\EE\left[\dd X(\varepsilon) \xmiddle| X(p)\right]}{\varepsilon}, \end{align}
as given in~\cite{gong1987smoothed,glasserman1990smoothed}.
\ Smoothed stochastic derivatives propagate through functions that are \emph{locally} linear over the range of $Y$ conditionally on $X(p)$, as we now formalize. Note that this is a much weaker requirement than global linearity over the full range of $X(p)$, as the range of $Y$ conditional on $X(p)$ is generally more restricted (e.g. $Y \in \{X(p)-1, X(p)+1\}$ for a binomial variable).

\begin{proposition}
Suppose that $f$ is linear over the range of\/ $Y$ conditionally on $X(p) = x$, for all $x$. Then,
\label{prop:lastone}
\begin{equation}
\pder{p} \EE f(X(p)) = \EE\left[ f'(X(p))\tilde\delta \right].
\end{equation}
\end{proposition}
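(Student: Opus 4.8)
The plan is to show that the triple $(f'(X(p))\delta,\, w,\, f(Y))$ plays the role of a stochastic derivative for the composed program $f \circ X$, and then to collapse it exactly as in the proof of \cref{prop:unbiased}, using local linearity of $f$ to recognize the collapsed expression as $\EE[f'(X(p))\tilde\delta]$. First I would observe that, for any bounded $f$ with bounded derivative, combining \cref{prop:repara} on the event $A_B^c(\varepsilon)$ with the defining identity \eqref{eq:deriv} on $A_B(\varepsilon)$ gives, almost surely,
\begin{equation}
\lim_{\varepsilon \to 0^{+/-}} \EE\left[\frac{f(X(p+\varepsilon)) - f(X(p))}{\varepsilon} \xmiddle| X(p)\right] = \EE\left[f'(X(p))\delta + w\big(f(Y) - f(X(p))\big) \xmiddle| X(p)\right].
\end{equation}
This is simply the sum of the continuous sensitivity on $A_B^c(\varepsilon)$ and the jump sensitivity on $A_B(\varepsilon)$ already used to prove \cref{prop:unbiased}, now carried out for a general test function rather than the identity.

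Next I would invoke the hypothesis that $f$ is affine over the conditional range of $Y$ given $X(p) = x$, read as affineness on a set containing both the primal value $x$ and the conditional support of $Y$. This yields the remainder-free first-order identity $f(Y) - f(X(p)) = f'(X(p))\,(Y - X(p))$ almost surely, conditionally on $X(p)$. Substituting this and using that $f'(X(p))$ is $X(p)$-measurable, so that it factors out of the conditional expectation, I obtain
\begin{equation}
\EE\left[f'(X(p))\delta + w\big(f(Y) - f(X(p))\big) \xmiddle| X(p)\right] = f'(X(p))\,\EE\left[\delta + w(Y - X(p)) \xmiddle| X(p)\right] = f'(X(p))\,\tilde\delta,
\end{equation}
where the last equality is the definition \eqref{eq:defsmooth} of $\tilde\delta$.

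Finally I would take total expectation, applying the tower property together with the dominated-convergence bound already furnished by \cref{prop:repara} and \cref{def:deriv} to exchange the limit with the outer expectation, concluding
\begin{equation}
\pder{p}\EE[f(X(p))] = \lim_{\varepsilon \to 0^{+/-}} \EE\left[\frac{f(X(p+\varepsilon)) - f(X(p))}{\varepsilon}\right] = \EE\big[f'(X(p))\,\tilde\delta\big].
\end{equation}
The main obstacle is technical rather than conceptual: the identities above hold in \cref{def:deriv} and \cref{prop:repara} only for bounded $f$ with bounded derivative, so to cover unbounded observables (such as the cube in the $\operatorname{Geo}(p)^3$ example) one must add the appropriate integrability hypotheses and either truncate $f$ or dominate $\big(f(X(p+\varepsilon)) - f(X(p))\big)/\varepsilon$ uniformly in $\varepsilon$ to justify the limit interchange. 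The only other delicate point is fixing the meaning of ``linear over the range of $Y$'' so that it includes $X(p)$, which is precisely what makes the remainder-free identity hold and lets the single factor $f'(X(p))$ pull through the conditional expectation.
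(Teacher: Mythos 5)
Your proposal is correct and follows essentially the same route as the paper: establish that $(f'(X(p))\,\delta,\, w,\, f(Y))$ acts as a stochastic derivative of $f \circ X$, collapse it via the unbiasedness argument of \cref{prop:unbiased}, and use the conditional linearity of $f$ to write $f(Y) - f(X(p)) = f'(X(p))\,(Y - X(p))$ and pull the $X(p)$-measurable factor $f'(X(p))$ out of the conditional expectation defining $\tilde\delta$. The only difference is minor — the paper justifies the first step by citing the chain rule \cref{prop:composition}, whereas you re-derive it directly from \cref{def:deriv} together with \cref{prop:repara} — and your closing caveat about unbounded observables correctly flags an integrability issue that the paper's one-line proof leaves implicit.
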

\begin{proof}
Note that $(f'(X(p)) \cdot \delta, w, f(Y))$ is a stochastic derivative of $f \circ X$ by \cref{prop:composition}. Now, by \cref{prop:unbiased} and the local linearity of $f$, we have the simplification, 
\begin{align}
\pder{p} \EE \left[f(X(p))\right] &= \EE\left[ f'(X(p)) \cdot \delta + w \left(f(Y) - f(X(p))\right)\right] \\
&= \EE\left[f'(X(p)) \left(\delta + w (Y - X(p))\right)\right] \\
&= \EE\left[ f'(X(p))\tilde\delta \right].
\end{align}
\end{proof}

In most cases, local linearity will only hold approximately, leading to bias in the estimate produced by propagating smoothed stochastic derivatives. However, in the case of a particle filter resampling step an exact estimate is produced, as we show in the following.

\section{Formalism of the particle filter}
\label{sec:SI_filter}

\subsection{Hidden Markov model}

Let us consider a hidden Markov model with random states $X_1, \dots, X_n$ as specified by a stochastic program $X_1(\theta)$ giving the starting value depending on parameters $\theta$ and consecutive states given by pointwise differentiable stochastic programs $X_i(x_{i-1}, \theta)$ depending on the previous state $x_{i-1} \sim X_{i-1}$ (Markov property) and $\theta$.
In general, we allow continuous probability densities $p(x_1\mid \theta)$ of $X_1$ and continuous transition probability densities $p(x_i\mid x_{i-1}, \theta)$ to depend arbitrarily on $\theta$. 
This latent process $X_1, \dots, X_n$ is indirectly observed as the process $y_1 \sim Y_1, \dots, y_n \sim Y_n$ with $n$ observations $Y_i(x_i)$ depending on $x_i \sim X_i$ which are specified to have smooth conditional probability densities $p(y_i \mid x_i,\theta)$ depending only on $x_i$ and $\theta$. 
As a concrete, special case, we consider the following linear Gaussian state-space model with a $d$-dimensional latent process, 
\begin{align}
X_i &= \Phi X_{i-1} + \operatorname{Normal}(0,Q),\label{eq:hidden_markov_model_1}\\
Y_i &= X_i + \operatorname{Normal}(0,R),
\label{eq:hidden_markov_model_2}
\end{align}
where $Q = 0.02 \cdot \id_{d \times d}$, $R=0.01 \cdot \id_{d \times d}$, $x_1 \sim \operatorname{Normal}(\mu,0.001 \cdot \id_{d \times d})$, $\mu \sim \operatorname{Normal}(0,\id_{d \times d})$ is a random initial position,  and $\Phi$ is a $d$-dimensional rotation matrix. Here, the parameters $\theta$ are defined by the entries of $\Phi$, i.e. $\theta = \operatorname{vec}(\Phi)$.
We use a particle filter to compute an estimate of the likelihood $\mathcal{L}= p(y_1, \dots, y_n \mid \theta)$.

\subsection{Differentiating a particle filter with a  resampling step}

Given $n$ observations $y_1 \sim Y_1 , \dots, y_n \sim Y_n$ of the hidden Markov model defined via Eqs.~\eqref{eq:hidden_markov_model_1} and \eqref{eq:hidden_markov_model_2}, a bootstrap particle filter allows us to approximate the posterior distributions of the states and the likelihood $\mathcal{L}$ of these observations by propagating a cloud of $K$ \emph{weighted} particles. We denote the $k$th particle by $x_i^{(k)}$.
Each particle evolves independently according to the stochastic program [Eqs.~\eqref{eq:hidden_markov_model_1} and \eqref{eq:hidden_markov_model_2}] and carries a weight $\varpi_i^{(k)}$ measuring how well the trajectory of the particle so far is matching the observations. This importance weight is updated by $\varpi^{(k)}_i = p(y_i \mid x^{(k)}_i, \theta) \varpi^{(k)}_{i-1}$, such that the empirical measure $\sum_k \varpi^{(k)}_i \delta_{x^{(k)}_i}/\sum_k \varpi^{(k)}_i$
of the weighted particles approximates the filtering distribution $p(x_i\mid y_1, \dots, y_i, \theta)$. 

So far, the particles and weight trajectories are differentiable with respect to the parameter~\citeSI{jonschkowski2018differentiable_2}.
However, weight degeneracy, the collapse of all but a few weights, is a common problem. Our goal is to discard unlikely particles, so that numerical resources are not wasted on particles with vanishing weight. The strategy to accomplish this goal is to include resampling steps, where we pick the particles that best match the observations. However, such resampling steps present discrete randomness, where the particle population is resampled according to the particle weights $\varpi_i^{(k)}$ to form a new population $(x')_i^{(k)}$ with equal weight $\varpi'_i = 1/K \cdot \sum_{k=1}^K \varpi_i^{(k)}$. 
To differentiate the resampling step, we need to describe how perturbing the weight distribution $\varpi_i^{(k)}$ provided to the resampling procedure affects the resampled particles and weights. 
Importantly, the marginal likelihood of a parameter $\theta$ can be read off using the weights at the last step
\begin{equation}
\label{eq:likelihood_pf}
    p(y_1, \dots, y_n \mid \theta) \approx \sum_{k=1}^K \varpi_n^{(k)}.
\end{equation} 

The strategies for resampling vary, but what they have in common is that the \emph{marginal} distribution of each resampled particle $x^{\prime(k)}_i$ is a multinomial distribution over the original particles with weights given by the normalized weight vector $ (\varpi^{(1)}_i/\varpi_i, \dots, \varpi^{(K)}_i/\varpi_i)$ with $\varpi_i = \sum_{k=1}^K \varpi^{(k)}_i$. Thus, a weighted sample of the marginal distribution of each resampled particle can be obtained by repeating the following procedure until obtaining a weight of $\varpi_i$: 
\begin{enumerate}
    \item Sample an integer $k$ uniformly from $1$ to $K$.
    \item Return the particle $x_i^{(k)}$ with assigned weight $\varpi_i \operatorname{Ber}(\varpi_i^{(k)}/\varpi_i)$.
\end{enumerate}
Conditional on the assigned weight being $\varpi_i$, the returned particle obeys the multinomial distribution. 
The key insight is that the alternate possibility of a resampled particle $x_i^{(k)}$ not being chosen can be written instead as the alternate possibility of its weight $\varpi_i^{(k)}$ changing to 0.

Recall from \cref{ex:ber} that there is an asymmetry between the left and right stochastic derivative of the Bernoulli distribution $\operatorname{Ber}(\varpi_i^{(k)}/\varpi)$ conditioned on the output 0 or 1 being chosen. 
In this case, it is convenient to take the \emph{left} stochastic derivative $(0, w_i^{(k)}, Y_i^{(k)})$ with $Y_i^{(k)} = 0$ and $w_i^{(k)} = -\varpi_i^k / \varpi_i$ of the particle's weight $\varpi_i$ because it is 0 when the assigned weight is 0, meaning that this case has no influence on the derivative estimate. Therefore, particles that are not resampled [and thus do not contribute to the primal value, cf. Eq.~\eqref{eq:likelihood_pf}] do (also) not contribute to the derivative computation.
The fact that only the weights have a stochastic derivative, but not the particles imposes the following setting:

Let $X(p)$ be a stochastic program approximated by the program $\tilde X(p)$. Assume we can sample from $\tilde X(p)$. Assuming absolute continuity, we may write expectations as $\EE \varpi(p) f(\tilde X(p))  = \EE f(X(p))$ where $\varpi(p)$ is the Radon-Nikodym derivative~\cite{asmussen2007stochastic} of the law of $\tilde X(p)$ with respect to the law of $X(p)$, evaluated in $\tilde X(p)$.
We thus consider the program which returns the pair of weight and value
\begin{equation}
    (\varpi(p), \tilde X(p)).
\end{equation}

\begin{proposition}\label{prop:feynmankac}
If $\tilde X(p)$ is a continuous program, reparameterized such that it is differentiable pointwise, and $\varpi(p)$ has a smoothed stochastic derivative $\tilde{\delta}$,
\begin{equation}
    \pder{p} \EE f(X(p)) = \EE\left[ \varpi(p)f'(\tilde X(p))\tilde X'(p)
+
 \tilde{\delta} f(\tilde X(p))
\right].
\end{equation}
\end{proposition}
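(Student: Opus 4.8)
The plan is to differentiate the absolute-continuity identity $\EE[f(X(p))] = \EE[\varpi(p) f(\tilde X(p))]$ supplied in the setup, which reduces the claim to computing the $p$-derivative of the expectation of the deterministic function $g(w,x) = w\,f(x)$ applied to the combined stochastic program $U(p) = (\varpi(p), \tilde X(p))$. The entire content is then to build a stochastic derivative for $g \circ U$ and collapse it via \cref{prop:unbiased}, exploiting that $g$ is linear in its first (weight) argument so that the jump part of $\varpi$ may be smoothed away.

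First I would assemble the stochastic derivative of the stacked program $U(p) = (\varpi(p), \tilde X(p))$. Since $\tilde X(p)$ is reparameterized to be pointwise differentiable, its stochastic derivative is the pure pathwise triple $(\tilde X'(p), 0, 0)$ with zero weight, and in particular it contributes no jumps. Applying the chain rule \cref{prop:composition} to stack it with the stochastic derivative $(\delta_\varpi, w_\varpi, Y_\varpi)$ of $\varpi(p)$, all jump weight comes from $\varpi$: the combined triple is $\delta_U = (\delta_\varpi, \tilde X'(p))$, $w_U = w_\varpi$, and $Y_U = (Y_\varpi, \tilde X(p))$, where only the weight coordinate jumps while the value coordinate stays at its primal value. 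Composing with the deterministic differentiable map $g$, whose gradient is $\nabla g(w,x) = (f(x), w f'(x))$, yields the stochastic derivative of $g \circ U$ as $\big(f(\tilde X(p))\,\delta_\varpi + \varpi(p) f'(\tilde X(p))\tilde X'(p),\ w_\varpi,\ Y_\varpi f(\tilde X(p))\big)$.

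Next I would invoke \cref{prop:unbiased} on $g\circ U$ and expand. The $\delta$-term supplies $\varpi(p) f'(\tilde X(p))\tilde X'(p)$ directly, while the $\delta_\varpi$-piece together with the jump term $w_\varpi\big(Y_\varpi f(\tilde X(p)) - \varpi(p) f(\tilde X(p))\big)$ collects, after factoring out the common $f(\tilde X(p))$ (this is precisely where linearity of $g$ in its weight argument is used), into $f(\tilde X(p))\big(\delta_\varpi + w_\varpi(Y_\varpi - \varpi(p))\big)$. The final step is to condition on the joint primal output: since $f(\tilde X(p))$ is measurable with respect to $\sigma(\varpi(p), \tilde X(p))$, the tower property lets me pull it outside the inner conditional expectation, turning $f(\tilde X(p))\big(\delta_\varpi + w_\varpi(Y_\varpi - \varpi(p))\big)$ into $f(\tilde X(p))\,\tilde\delta$, where $\tilde\delta$ is the smoothed stochastic derivative of the weight (\cref{def:smooth}). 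Summing the two contributions gives the stated formula; this argument is exactly the product-structure analogue of \cref{prop:lastone}.

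The main obstacle is the bookkeeping of the conditioning in the last step: the smoothed derivative $\tilde\delta$ must be taken conditional on the \emph{joint} primal $(\varpi(p), \tilde X(p))$ rather than on $\varpi(p)$ alone, so that $f(\tilde X(p))$ is measurable and factors cleanly out of the inner expectation, and one must verify that this is the relevant notion of smoothing in the present setting (it is, since value and weight are produced jointly by the same run). A secondary point requiring care is justifying that differentiation commutes with the expectation on both sides of the absolute-continuity identity, and that a simultaneous jump of value and weight is negligible; the latter is automatic here because $\tilde X(p)$ carries zero weight and hence never jumps.
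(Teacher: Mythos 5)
Your proposal is correct and follows essentially the same route as the paper: the paper likewise starts from the identity $\EE f(X(p)) = \EE[\varpi(p) f(\tilde X(p))]$ and then invokes its local-linearity result (\cref{prop:lastone}) for the map $(\varpi, x)\mapsto \varpi f(x)$, which is exactly the chain-rule-plus-unbiasedness-plus-factoring argument you carry out explicitly for the stacked program $(\varpi(p), \tilde X(p))$. The only difference is one of packaging — the paper cites \cref{prop:lastone} in one line where you unroll its proof for this instance — and your attention to conditioning on the joint primal is the right reading of the smoothing in this setting.
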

\begin{proof}
By assumption,
\begin{equation}
\EE f(X(p)) = \EE \varpi(p) f(\tilde X(p)).
\end{equation}
As the function $(\varpi, x) \mapsto \varpi f(x)$ is linear in $\varpi$, the statement follows from \cref{prop:lastone}.
\end{proof}

Therefore, we can replace the stochastic derivative of $\varpi \operatorname{Ber}(\varpi_i^{(k)}/\varpi)$ with its smoothed stochastic derivative $w_i^{(k)}(1 - Y_i^{(k)}) = \varpi_i^k / \varpi_i$ obtained using \cref{derived-table}. 
This is convenient, as smoothed stochastic derivatives permit forward- and reverse-mode whereas reverse-mode AD becomes superior than forward-mode AD for functions $f$ from $\mathbb{R}^n$ to $\mathbb{R}^m$ with $m \gg n$~\cite{baydin2018automatic}. 
Since the weights are used in a purely linear fashion, \cref{prop:lastone} and \cref{prop:feynmankac} guarantee that the derivative estimator is unbiased. 

Numerically, we accomplish this in our code with a formally differentiable weight function \texttt{new\_weight(p)} whose primal value is always 1 but whose derivative is the left smoothed stochastic derivative of $\operatorname{Ber}(p)$ for primal output 1,
\begin{equation}
    \tilde{\delta_L} = \frac{1}{p},
\end{equation}
so that a particle $x_i^{(k)}$ has weight given as
\begin{equation}
    \varpi_i \cdot\texttt{new\_weight}(\varpi_i^{(k)}/\varpi_i).
    \label{eq:particleweight}
\end{equation}
In \cite{scibior2021differentiable}, an equivalent expression is derived by different means, using the stop-gradient operator $\bot$: 
\begin{equation}
    \varpi_i \cdot \frac{(\varpi_i^{(k)}/\varpi_i)}  {\bot(\varpi_i^{(k)}/\varpi_i)},
    \label{eq:stopgrad}
\end{equation}
where $\bot$ is formally assigned a derivative of 0.

\section{Implementation details}
\label{sec:implementation}

\subsection{Experiment details}

All computation times in \cref{fig:ps} were measured on an Intel Xeon Platinum 8260 CPU and Julia version 1.6. Garbage collection times are included in the total run time. We provide code and instructions to run the examples in the tutorials folder of \texttt{StochasticAD.jl}.

\subsection{Software dependencies}

\texttt{StochasticAD.jl} is implemented in the Julia Language \citeSI{Julia-2017_2} and uses internally \texttt{Distributions.jl} and \texttt{DistributionsAD.jl}~\citeSI{Distributions.jl-2019,JSSv098i16},
\texttt{ChainRulesCore.jl}~\citeSI{frames_catherine_white_2022_6574605}, and
\texttt{ForwardDiff.jl}~\citeSI{RevelsLubinPapamarkou2016_2}.

The presented examples are using the additional packages:
\texttt{BenchmarkTools.jl}~\citeSI{BenchmarkTools.jl-2016}, 
\texttt{Zygote.jl}~\citeSI{Zygote.jl-2018},
\texttt{GaussianDistributions.jl}~\citeSI{GaussianDistributions}, and \texttt{Plots.jl}~\citeSI{christ2022plots}.

We refer the reader to the documentation of \texttt{StochasticAD.jl} for more detailed information on the package implementation.

\section{Funding details}

This material is based upon work supported by the National Science Foundation under grant no. OAC-1835443, grant no. SII-2029670, grant no. ECCS-2029670, grant no. OAC-2103804, and grant no. PHY-2021825. The information, data, or work presented herein was funded in part by the Advanced Research Projects Agency-Energy (ARPA-E), U.S. Department of Energy, under Award Number DE-AR0001211 and DE-AR0001222. This material is based upon work supported by the Defense Advanced Research Projects Agency (DARPA) under Agreement No HR00112290091. We also gratefully acknowledge the U.S. Agency for International Development through Penn State for grant no. S002283-USAID. The views and opinions of authors expressed herein do not necessarily state or reflect those of the United States Government or any agency thereof. This material was supported by The Research Council of Norway and Equinor ASA through Research Council project "308817 - Digital wells for optimal production and drainage". Research was sponsored by the United States Air Force Research Laboratory and the United States Air Force Artificial Intelligence Accelerator and was accomplished under Cooperative Agreement Number FA8750-19-2-1000. The views and conclusions contained in this document are those of the authors and should not be interpreted as representing the official policies, either expressed or implied, of the United States Air Force or the U.S. Government. The U.S. Government is authorized to reproduce and distribute reprints for Government purposes notwithstanding any copyright notation herein. We also acknowledge financial support
from the NCCR QSIT funded by the Swiss National
Science Foundation (Grant No. 51NF40-185902), as well
as from the Swiss National Science Foundation individual grant (Grant No. 200020\_200481).

\bibliographystyleSI{unsrt}
\bibliographySI{refs2}

\end{document}